\documentclass{article}

\PassOptionsToPackage{numbers}{natbib}
\usepackage[preprint]{neurips_2026}

\usepackage[utf8]{inputenc} % allow utf-8 input
\usepackage[T1]{fontenc}    % use 8-bit T1 fonts
\usepackage{hyperref}       % hyperlinks
\usepackage{url}            % simple URL typesetting
\usepackage{booktabs}       % professional-quality tables
\usepackage{amsfonts}       % blackboard math symbols
\usepackage{nicefrac}       % compact symbols for 1/2, etc.
\usepackage{microtype}      % microtypography
\usepackage{xcolor}         % colors

\usepackage{wrapfig}
\usepackage{amsmath}
\usepackage{amssymb}
\usepackage{mathtools}
\usepackage{amsthm}
\usepackage{colortbl}
\usepackage{longtable}
\usepackage{mdframed}
\usepackage{color}
\usepackage{xspace}
\usepackage{adjustbox}
\usepackage{enumitem}
\usepackage{setspace}
\usepackage{tikz}
\usepackage{colortbl}
\usepackage{tcolorbox}

\RequirePackage{xspace}
\makeatletter
\DeclareRobustCommand\onedot{\futurelet\@let@token\@onedot}
\def\@onedot{\ifx\@let@token.\else.\null\fi\xspace}

\usepackage[capitalize,noabbrev]{cleveref}

\theoremstyle{plain}
\newtheorem{theorem}{Theorem}[section]

\newtheorem{lemma}[theorem]{Lemma}
\newtheorem{corollary}[theorem]{Corollary}
\theoremstyle{definition}
\newtheorem{definition}[theorem]{Definition}
\newtheorem{assumption}[theorem]{Assumption}
\theoremstyle{remark}

\usepackage[textsize=tiny]{todonotes}

\newcommand{\ours}{\texttt{CPO}\xspace}

\title{Beyond Entropy: Correctness-Aware Advantage Shaping via Contrastive Policy Optimization}

\author{
Weiwen Xu\raisebox{4pt}{\small $1$} \enspace Jia Liu\raisebox{4pt}{\small $2$}\thanks{Corresponding authors} \enspace Hou Pong Chan\raisebox{4pt}{\small $1$} \enspace Long Li\raisebox{4pt} \enspace \; Deng Cai\raisebox{4pt}{\small $1$} \enspace Min Chen\raisebox{4pt}{\small $2$} \enspace Hao Zhang\raisebox{4pt}{\small $3$} \\
\raisebox{4pt}{\small $1$}The Chinese University of Hong Kong \\
\raisebox{4pt}{\small $2$}South China University of Technology \\
\raisebox{4pt}{\small $3$}Nanyang Technological University \\
{\tt weiwen.xuu@gmail.com; Jialiu0330@hust.edu.cn} \\
}

\begin{document}

\maketitle

\begin{abstract}
    Reinforcement learning with verifiable rewards (RLVR) commonly uses entropy for advantage shaping. However, entropy cannot distinguish useful uncertainty from detrimental confusion, limiting its effectiveness as a correctness signal. We propose Contrastive Policy Optimization (\texttt{CPO}), which uses token-level contrastive disagreement between reference-guided and vanilla generation distributions for correctness-aware advantage shaping. Both theoretical and empirical results show that this disagreement reliably indicates token-level correctness. We further show that  On-policy Distillation is a special case of \texttt{CPO}, where the posterior distribution is instantiated by an external teacher model. \texttt{CPO} also resolves the zero-advantage problem. Experiments on in-domain and out-of-domain benchmarks demonstrate that \texttt{CPO} substantially outperforms entropy-based RLVR methods while maintaining strong generalization. Further analysis shows that correct and incorrect responses naturally support exploration and exploitation respectively, and balancing both leads to the best performance.
\end{abstract}

\section{Introduction}
\label{sec:intro}
Reinforcement learning with verifiable rewards (RLVR) has substantially advanced performance in domains like mathematics and programming by providing a simple yet effective recipe for improving reasoning~\cite{lambert2024tulu,yang2025qwen3, guo2025deepseek,team2025kimi,yue2025does}. 
A notable example is Group Relative Policy Optimization (GRPO~\cite{shao2024deepseekmath}), which streamlines the RL formulation by eliminating the need for a separate critic model~\cite{schulman2017proximal}.
GRPO estimates advantages by sampling multiple outputs per prompt and normalizing rewards within each group. 
However, its binary correctness-based reward scheme introduces two fundamental limitations: (1) it neglects the distinct contributions of individual tokens throughout the reasoning trajectory~\cite{cheng2025reasoning}, and (2) it offers no mechanism for differentiating quality among responses in zero-advantage groups, wasting vast training data~\cite{zheng2025act,le2025no}.

% \xww{show the advantage shaping methods}
\begin{figure}[t]
    \centering
    \includegraphics[width=1\linewidth]{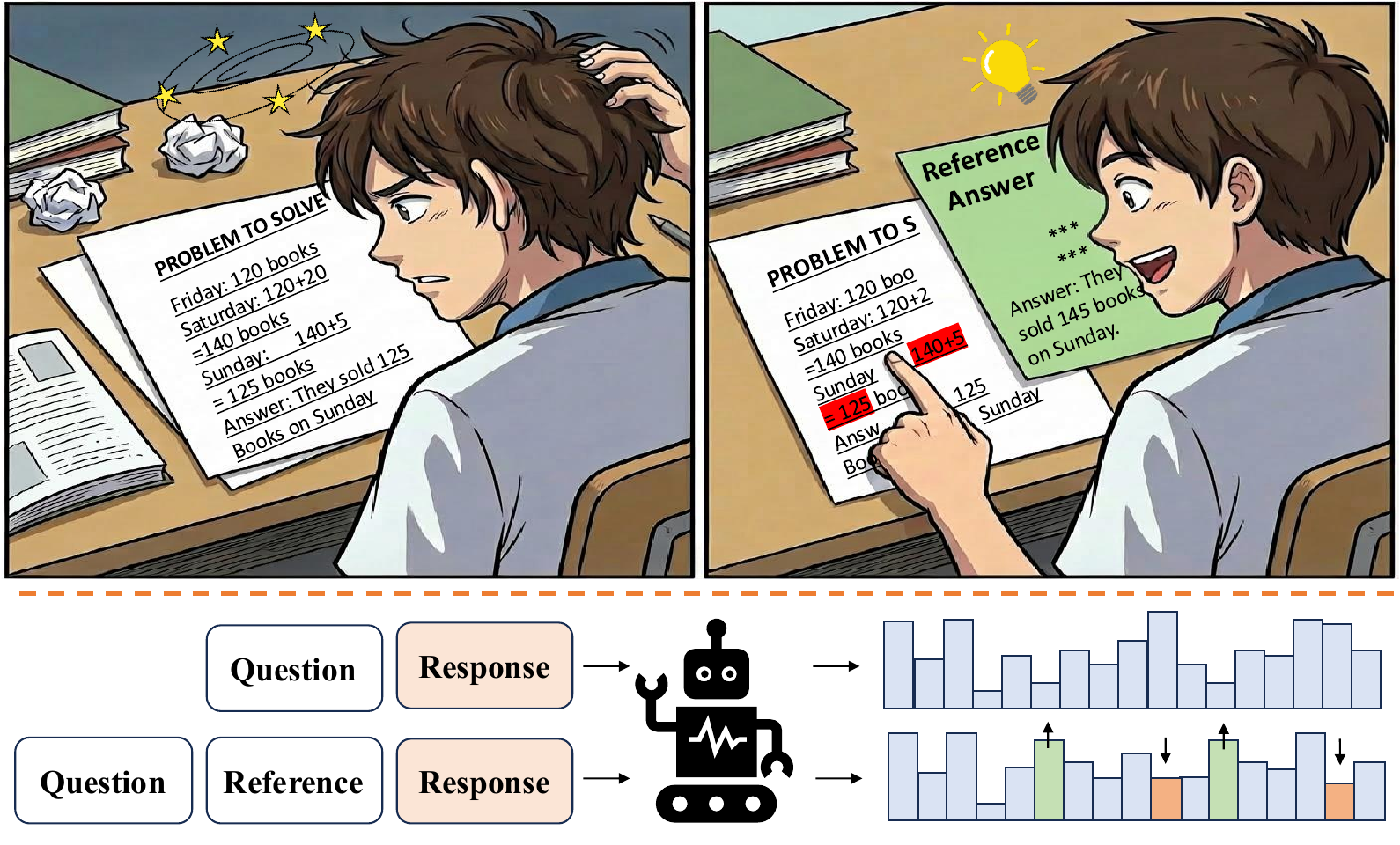}
    \caption{ \textbf{Top}: A student spots his homework mistake by comparing his work to the reference answer. \textbf{Bottom}: An LLM's response generation probabilities shift when guided by a reference answer.}
    \label{fig:intro}
    \vspace{-10pt}
\end{figure}

Recent work has identified \textit{entropy} as a critical signal for advantage shaping in GRPO~\cite{deng2025decomposing}.
Subsequent methods incorporate entropy to refine optimization signals, either by selectively updating high-entropy tokens~\cite{wang2025beyond} or by augmenting advantage estimates with entropy-based terms~\cite{cheng2025reasoning, agarwal2025unreasonable, wang2025harnessing}.
They effectively improve exploration at uncertain decision points and introduce finer-grained advantage differentiation that better distinguishes token importance both within and across trajectories~\cite{le2025no}.
However, entropy quantifies uncertainty while remaining agnostic to the quality of underlying reasoning. A model may exhibit high entropy either \textit{when productively exploring alternative solutions} or \textit{when becoming confused by incorrect paths}~\cite{wang2025beyond}. Thus, entropy and correctness are intrinsically misaligned.
This creates an inherent challenge for optimization: \textit{entropy alone cannot distinguish valuable exploration from detrimental errors}. This competing effects have led to contradictory design choices across existing methods: some reward high-entropy tokens in correct responses~\cite{le2025no}, while others instead favor low-entropy tokens~\cite{wang2025harnessing}.
Fundamentally, what is needed is a \textit{correctness-aware} signal that can reliably identify tokens aligned with correct reasoning.

This intuition can be illustrated through human-problem-solving. As shown in Figure~\ref{fig:intro}, students often struggle to identify mistakes without guidance, but once a reference answer is provided, incorrect steps become immediately apparent upon review.
This contrast highlights a simple principle: 
\textit{correctness emerges through comparison between self-review and reference-guided review}.
A similar pattern appears in LLMs. When evaluating their own generations alone, LLMs lack fine-grained awareness of the erroneous parts. 
In contrast, conditioning on a reference answer induces distinct probability shifts on specific tokens:
increasing token probabilities aligned with the reference and suppressing those that contradict it. Such contrastive probability changes may provide a more faithful, correctness-aligned signal than entropy.

Building on these insights, we propose \textbf{C}ontrastive \textbf{P}olicy \textbf{O}ptimization (\ours), a framework for correctness-aware advantage shaping in RLVR. \ours introduces a \textit{token-level contrastive disagreement} that quantifies the divergence between reference-guided and vanilla generation distributions.
Through theoretical and empirical analysis, we show that this contrastive disagreement reliably indicates token-level correctness.
Since both contrastive disagreement and RLVR rewards are correctness signals operating at different granularities, token-level and 
trajectory-level respectively, their combination is principled rather than heuristic, 
motivating our advantage shaping design. To ensure stability, we constrain the disagreement magnitude to prevent it from overwhelming or reversing the trajectory-level advantage. \ours's token-level advantage shaping directly mitigates the zero-advantage issue prevalent in existing RLVR approaches~\cite{le2025no,yu2025dapo}, enabling more informative gradients and effective exploration across large-scale training corpora.

Interestingly, contrastive disagreement also offers a principled theoretical grounding 
for on-policy distillation (OPD ~\cite{agarwal2024onpolicy,lu2025onpolicydistillation}): the reverse KL used in OPD is a special case of contrastive disagreement.
More broadly, our theoretical framework subsumes diverse OPD variants under a single correctness-driven perspective: whether the teacher is an external model~\cite{lu2025onpolicydistillation}, a self-teacher conditioned on reference answers~\cite{lin2025ravr}, critic feedback~\cite{hubotter2026reinforcement}, or other contextual signals~\cite{wang2026openclaw}, what matters is that the teacher distribution is more correctness-informed than the current policy.

Results show that \ours substantially enhances reasoning capabilities over entropy-based RLVR methods while largely preserving out-of-domain performance. On Qwen2.5-Math-7B and Qwen3-Base-4B, \ours outperforms GRPO by 7.7\% and 8.5\% on average, respectively.
We analyze the distinction between contrastive disagreement and entropy, finding that \ours focuses on discriminative features tied to correctness, whereas entropy-based methods emphasize linguistic variability. We show that correct and incorrect responses in \ours naturally support exploration and exploitation, and that balancing these roles leads to superior performance.
Our contributions are as follows:
(1) We propose contrastive disagreement as a more reliable token-level correctness signal than entropy, supported by both theoretical and empirical evidence.
(2) Our contrastive disagreement lays a theoretical foundation for OPD: diverse OPD 
variants, whether based on external teachers, reference-guided, or critic-guided 
self-teachers, can be understood as different instantiations of a correctness-informed 
distribution, unifying RLVR and OPD under a single correctness-driven objective.
(3) We introduce \ours, a correctness-aware advantage shaping framework that is computationally efficient and effectively addresses the zero-advantage problem in RLVR.
(4) We show \ours's effectiveness on in-domain math reasoning and out-of-domain generalization, along with in-depth analysis of its learning dynamics.

\section{Contrastive Policy Optimization}
\label{sec:method}
We begin by validating the reliability of contrastive disagreement to provide a correctness-aligned signal for token-level advantage shaping, both theoretically and empirically. Building on this foundation, we introduce \ours framework.

\subsection{Contrastive Disagreement} % 推导

\subsubsection{Theoretical Analysis}
% \paragraph{Setup.} 
Given the question $x$,  the sampled trajectory $y=(y_1,\dots,y_{|y|})$\footnote{Without loss of generality, we drop the subscript $i$ in this subsection. Here $y_t \equiv y_{i,t}$ denotes the token at position $t$ of $y_i$.} from the policy $\pi$ and the environment reward $R(x,y)\in\{-1,1\}$, 
we define the \textit{position-wise probability of the oracle correctness event} $C := \{R(x,Y)=1\}$:
\begin{equation}
\small
\begin{split}
g(x,y_{<t},y_t) := \mathbb{P}\big(C | X=x, Y_{<t}=y_{<t}, Y_t=y_t \big),
\end{split}
\end{equation}
which measures the probability that choosing token $y_t$ at position $t$ will lead to a correct completion, averaging over all possible future continuations.

% \paragraph{Correctness-conditioned token distribution.}
We define the ideal next-token distribution conditioned on the oracle correctness as:
\begin{equation}
\small
\tilde{\pi}_{\mathrm{post}}(y_t\mid x,y_{<t}) := \mathbb{P}\!\left( Y_t=y_t \,\middle|\, X=x,\; Y_{<t}=y_{<t},\; C \right).
\end{equation}
By Bayes' rule, this distribution is regarded as a reweighting of the prior by correctness probability:
\begin{equation}
\small
\tilde{\pi}_{\mathrm{post}}(y_t\mid x,y_{<t}) =  \frac{\mathbb{P}\big(Y_t=y_t  | X=x, Y_{<t}=y_{<t}\big)\, g(x,y_{<t},y_t)}{Z(x,y_{<t})}  = \frac{\pi(y_t\mid x,y_{<t})\, g(x,y_{<t},y_t)}{Z(x,y_{<t})},
\label{eq:bayes_reweight}
\end{equation}
where $Z(x,y_{<t}) = \sum_b \pi(b\mid x,y_{<t})\, g(x,y_{<t},b)$ is the prior-average correctness probability at prefix $y_{<t}$.
Taking logarithms yields:
\begin{equation}
\small
\begin{split}
\log \frac{\tilde{\pi}_{\mathrm{post}}(y_t\mid x,y_{<t})}{\pi (y_t\mid x,y_{<t})}= \log g(x,y_{<t},y_t) - \log Z(x,y_{<t}).
\end{split}
\label{eq:ideal_disagreement}
\end{equation}
Crucially, for fixed $(x,y_{<t})$, this log-ratio is monotonically increasing in $g(x,y_{<t},y_t)$. Therefore:
\begin{itemize}[leftmargin=*,topsep=1pt]
\setlength{\itemsep}{0pt}
\setlength{\parskip}{0pt}
\setlength{\parsep}{0pt}
\item Tokens with \textbf{below-average} correctness  ($g(x,y_{<t},y_t) < Z(x,y_{<t})$) have \textbf{negative} log-ratio.
\item Tokens with \textbf{above-average} correctness  ($g(x,y_{<t},y_t) > Z(x,y_{<t})$) have \textbf{positive} log-ratio.
\end{itemize}

% \paragraph{Connecting reference-guided scoring to correctness.}
In practice, we do not have direct access to the ideal distribution $\tilde{\pi}_{\mathrm{post}}$. 
However, we can condition the model on the oracle reference $y^\star$ (\textit{e.g.},  prompting the model to refine based on $y^\star$) to obtain $\pi_{\mathrm{post}}(y_t\mid x,y^\star,y_{<t})$.
Since $y^\star$ encodes the ground-truth that determines correctness and the prompt also asks the model to refine, conditioning on such prompt serves as a practical proxy for conditioning on the correctness event $C$, \textit{i.e.}, $\pi_{\mathrm{post}}(\cdot\mid x,y^\star,y_{<t}) \approx \tilde{\pi}_{\mathrm{post}}(\cdot\mid x,y_{<t})$.
Consequently, the \textit{contrastive disagreement}
\begin{equation}
\small
\begin{aligned}
\delta_t(x,y) &:= \log \frac{\pi_{\mathrm{post}}(y_t\mid x,y^\star,y_{<t})}{\pi (y_t\mid x,y_{<t})} \propto \log g(x,y_{<t},y_t)
\end{aligned}
\label{eq:delta}
\end{equation}
provides a good estimate of the ideal log-ratio in Eq.~\eqref{eq:ideal_disagreement}.
Importantly, this holds \textit{regardless of whether the rollout $y$ is correct or not}---the correctness probability $g(x,y_{<t},y_t)$ is defined over future continuations, not individual trajectories. Full derivations are provided in Appendix~\ref{apdx:sec:theory}.

% Large negative disagreement $\delta_t(x,y) \ll 0$ identifies tokens that are suppressed under oracle-informed guidance, indicating they are likely to be \textit{incorrect-leaning} at that prefix; conversely, large positive disagreement $\delta_t(x,y) \gg 0$ identifies tokens that are promoted, indicating they are likely to be \textit{correct-leaning}.
% This provides a principled token-level signal for advantage shaping: \textit{we penalize tokens with negative disagreement and reward tokens with positive disagreement, thereby encouraging the model to favor tokens that increase correctness probability}. 

\subsubsection{Empirical Studies}
\label{sec:emp_study}
\begin{wrapfigure}{r}{0.5\linewidth}
\vspace{-5pt}
    \centering
    \includegraphics[width=\linewidth]{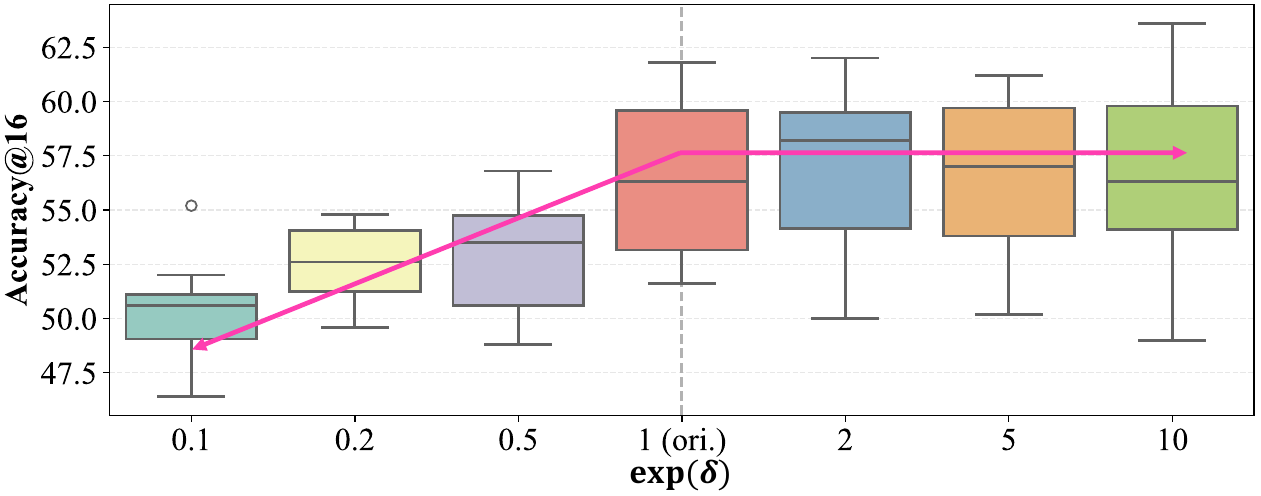}
    \vspace{-18pt}
    \caption{Impact of contrastive disagreement on token correctness (Accuracy@16). }
    \label{fig:pre}
    \vspace{-16pt}
\end{wrapfigure}

% Contrastive Generation Disagreement indicates Incorrectness

We empirically validate Eq.~\eqref{eq:delta}, showing that contrastive disagreement $\delta_t(x,y)$ reliably indicates token-level correctness $g(x,y_{<t},y_t)$. Since $g(x,y_{<t},y_t)$ represents the expected correctness averaging over all possible future continuations from token $y_t$, it is intractable to compute exactly. We approximate it with Accuracy@16, \textit{i.e.}, sampling 16 continuations from $y_t$ and measuring their average score.

\begin{figure*}[t]
    \centering
    \includegraphics[width=\linewidth]{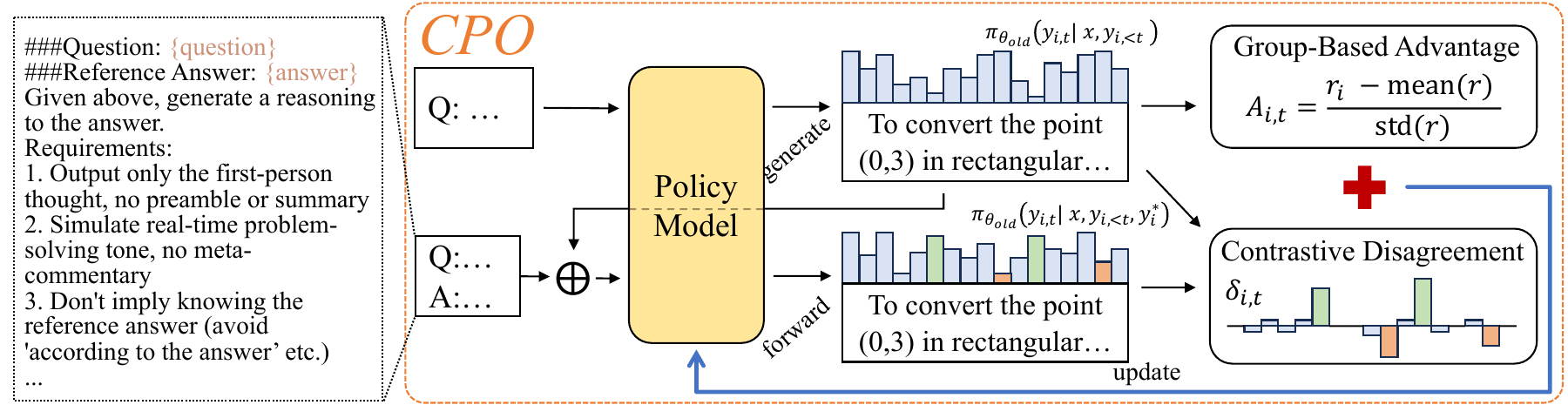}
    \caption{The illustration of \ours. Prior and reference-guided posterior likelihoods are contrasted to shape token-level advantages.  }
    \label{fig:pipeline}
    \vspace{-3pt}
\end{figure*}

\textbf{Experimental setup.}
We adopt Qwen2.5-7B-Math~\cite{yang2024qwen2} as our prior policy $\pi$ and evaluate on queries from Math-500~\cite{hendrycks2021measuring}. 
For each query $x$, we first sample a vanilla generation $y \sim \pi (\cdot \mid x)$ and record the token-level generation probabilities $\pi (y_{t} \mid x, y_{<t})$ for each token $y_{t}$. 
Next, we compute reference-guided generation probabilities $\pi_{\mathrm{post}}(y_{t} \mid x, y^\star, y_{<t})$ by forward-passing the sampled sequence $y$ through the model conditioned on the reference $y^\star$.
Both vanilla and reference-guided prompts are in Appendix~\ref{apdx:sec:prompt}.
We then compute the token-level contrastive disagreement $\delta_t(x,y)$.

% \paragraph{Identifying fork tokens.}
We identify the first token position $t^\star$ where the disagreement deviates from a predefined threshold $\delta$:
\begin{equation}
\small
t^\star = \begin{cases}
\min \left\{ t \in \{1, 2, \ldots, |y|\} \mid \delta_t <  \delta \right\}, & \delta < 0, \\
\min \left\{ t \in \{1, 2, \ldots, |y|\} \mid \delta_t >  \delta \right\}, & \delta > 0.
\end{cases}
\end{equation}
Note that  $\delta_t = 0$ denotes the boundary where no disagreement exists. For $\delta <0$, more negative values indicate greater incorrectness, while for $\delta >0$, more positive values indicate greater correctness.

% \paragraph{Resampling continuations.}
We then resample continuations starting from these high-disagreement positions: $\hat{y} \sim \pi(\cdot \mid x, y_{\le t^\star}) $. 
By conditioning on the prefix up to $t^\star$ (which includes the high-disagreement token $y_{t^\star}$), each resampled completion $\hat{y}$ is directly influenced by the token choice at this critical fork point. We repeat this resampling process 16 times for each query and compute the Accuracy@16 as the empirical approximation of the correctness probability $g(x,y_{<t^\star},y_{t^\star})$.

 % If our theoretical analysis holds—that $\delta_t(x,y)$ correlates with $g(x,y_{<t},y_t)$—then tokens with larger negative disagreement should yield lower accuracy@16, while tokens with larger positive disagreement should yield higher accuracy@16.

\textbf{Results.}
Figure~\ref{fig:pre} shows the Accuracy@16 of $\hat{y}$ as a function of the threshold $\exp(\delta)$. 
The results show clear asymmetry. As we select tokens with increasingly negative disagreement ($\exp(\delta)<1$), accuracy drops substantially from 57\% at the boundary ($\exp(\delta) = 1$) to approximately 48\% at $\exp(\delta) = 0.1$. This strongly confirms that negative contrastive disagreement reliably identifies incorrect-leaning tokens.
In contrast, the positive disagreement ($\exp(\delta) > 1$) maintains stable accuracy near baseline level.
This asymmetry is expected: selecting correct-leaning tokens merely maintains the baseline performance, which is fundamentally bounded by the backbone model's capabilities.
Nevertheless, the results validate our theoretical analysis, showing that  $\delta_t(x,y)$ correlates with $g(x,y_{<t},y_t)$. Notably, this correlation is particularly strong for incorrect tokens, confirming the theoretical findings in a more realistic scenario.
% This observation provides valuable guidance for advantage shaping: penalizing tokens with negative disagreement offers a robust and actionable correctness-aligned signal for policy optimization.
% This asymmetry is expected: selecting correct tokens merely preserves baseline performance, bounded by the backbone model's capabilities. This guides our advantage shaping: penalizing tokens with negative disagreement provides a robust correctness signal for policy optimization.

% \textbf{Implication.}
% Recall the contrastive disagreement in Eq.~\eqref{eq:delta}: it is similar to the reverse KL used in OPD, where OPD can be recovered by simply instantiating $\pi_{\mathrm{post}}$ as an external teacher model. However, \ours proposes a more general theoretical perspective: \textit{any distribution that is more correctness-informed than the current policy serves as a valid instantiation of} $\pi_{\mathrm{post}}$, including an external teacher, the policy conditioned on reference answers~\cite{lin2025ravr}, or critic feedback~\cite{xu-etal-2024-reasons}, or any other distribution that is more likely to produce correct outputs than the current policy. Moreover, under our framework, contrastive disagreement and RLVR rewards are both correctness signals operating at different granularities, token-level and trajectory-level, making their combination in \ours principled rather than heuristic. This motivates our following advantage-shaping design.

\textbf{Implication.}
The contrastive disagreement in Eq.~\eqref{eq:delta} resembles the reverse KL used in OPD, which can be viewed as a special case by with an external teacher as $\pi_{\mathrm{post}}$. More generally, \ours admits any distribution more correctness-informed than the current policy as $\pi_{\mathrm{post}}$, including external teachers, critic-guided policies~\cite{xu-etal-2024-reasons}, or reference-conditioned policies~\cite{lin2025ravr}. In this work, we use reference answers because they provide direct correctness signals. We also show that reference-conditioned policies can even outperform external teachers, while avoiding reliance on stronger models and enabling self-improvement within the same policy in  Sec~\ref{sec:opd_to_cpo}. 

\subsection{Advantage Shaping with Contrastive Disagreement}

Under the above view, contrastive disagreement and RLVR rewards serve as complementary correctness signals at token and trajectory levels, motivating our advantage-shaping design.
Since trajectory-level advantage $A_i$ cannot distinguish correct from erroneous tokens, we use contrastive disagreement  $\delta_t(x,y)$ to shape advantages for finer-grained credit assignment.

The overall framework is depicted in Figure~\ref{fig:pipeline}. The policy model generates the response with prior likelihood $\pi_{\theta_{\text{old}}}(y_t\mid x,y_{<t})$. Then, we concatenate it with a reference-guided prompt, detailed in Appendix~\ref{apdx:sec:prompt}, which asks the model to regenerate a first-person response based on the reference answer. This prompt simulates real-time problem-solving tone, making the sampled response an appropriate continuation. 
This combined sequence is resent to the policy model to obtain the posterior likelihood $\pi_{\theta_{\text{old}}}(y_t\mid x, y^\star,y_{<t})$. Following standard RLVR methods~\cite{shao2024deepseekmath}, we calculate the trajectory-level advantage $A_{i,t} = A_i$, and additionally compute the contrastive disagreement as posterior-prior KL:
\begin{equation}
\small
\label{eq:cpo_delta}
\delta_{i,t} = \log \frac{\pi_{\theta_{\text{old}}}(y_{i,t}\mid x, y_i^\star,y_{i,<t}) }{\pi_{\theta_{\text{old}}}(y_{i,t}\mid x,y_{i,<t})} \text{.}
\end{equation}
\textbf{Advantage Magnitude and Direction.}
The trajectory-level advantage ${A}_{i,t}$ should be the primary determinant, as the goal is to reward the generation of correct solutions. Meanwhile, the disagreement magnitude $|\delta_{i,t}|$ indicates how decisively a token leans toward correctness or incorrectness. We therefore propose the base form of advantage shaping as ${A}_{i,t} + \alpha \cdot \delta_{i,t}$, where $\alpha > 0$ controls the strength of the token-level signal. 
The primacy of trajectory-level advantage ${A}_{i,t}$ extends beyond magnitude to direction as well.
However, in practice, conflicts can arise between trajectory-level and token-level signals. For instance, a good response (${A}_{i,t} > 0$) may contain tokens with strongly negative disagreement ($\delta_{i,t} \ll 0$), potentially reversing the advantage sign. 
To ensure the shaped advantage direction remains consistent with ${A}_{i,t}$, we introduce a clipping operation:
\begin{equation}
\small
\tilde{A}_{i,t} = \begin{cases}
{A}_{i,t} + \max(\alpha^{+} \cdot \delta_{i,t}, -\frac{1}{2}{A}_{i,t} ), & {A}_{i,t} > 0, R_i=1, \\
{A}_{i,t} + \min(\alpha^- \cdot \delta_{i,t}, -\frac{1}{2}{A}_{i,t} ), & {A}_{i,t} < 0, R_i=-1, \\
\alpha^+ \cdot \delta_{i,t}, & {A}_{i,t} = 0, R_i=1, \\
\alpha^- \cdot \delta_{i,t}, & {A}_{i,t} = 0, R_i=-1, \\
\end{cases}
\label{eq:shaped_advantage}
\end{equation}
where $\alpha^{+}$, $\alpha^{-}$ denote the combination strength for correct and incorrect responses. 
We do not clip the original zero-advantage responses, because the original advantage offers no value for policy optimization. However, our token-level disagreements still offer meaningful learning signals, enabling more informative gradients and more effective exploration across large-scale training corpora.
% Zero-advantage responses are typically clipped, since the original advantage carries no informative signal for policy optimization. In contrast, we retain zero-advantage responses since our token-level contrastive disagreement still offers meaningful supervision, leading to more informative gradients and more effective exploration at scale.
The final CPO objective follows the standard GRPO, with $\tilde{A}_{i,t}$ replacing the original advantage:
\begin{equation}
\small
\begin{split}
\label{eq:cpo}
\mathcal{J}_{\text{CPO}}(\theta) = \mathbb{E}_{x \sim \mathcal{D}, y_i \sim \pi_{\theta_{\text{old}}}(\cdot|x)} \Bigg[ \frac{1}{G} \sum_{i=1}^{G} \frac{1}{|y_i|} \sum_{t=1}^{|y_i|} \min \left( r_{i,t}(\theta) \textcolor{red}{\tilde{A}_{i,t}}, \text{clip}(r_{i,t}(\theta), 1-\varepsilon, 1+\varepsilon) \textcolor{red}{\tilde{A}_{i,t}} \right) \Bigg] \text{.}
\end{split}
\end{equation}

% \subsection{CPO: Connecting RLVR with On-Policy Distillation}
% %\subsection{CPO and Its Connection to On-Policy Distillation}
% % \subsection{CPO as On-Policy Distillation}
% Let's revisit the base form of \ours (without clipping):
% \begin{equation}
%    \tilde{A}_{i,t} = {A}_{i,t} + \alpha \cdot \log \frac{\pi_{\theta_{\text{old}}}(y_{i,t}\mid x, y_i^\star,y_{i,<t}) }{\pi_{\theta_{\text{old}}}(y_{i,t}\mid x,y_{i,<t})},
% \end{equation}
% where the first term represents standard RLVR, determining whether to reinforce or suppress the response. The second term is precisely the reverse KL used in on-policy distillation (OPD;~\citebare{lu2025onpolicydistillation}), offering token-level teacher guidance.
% The key distinction between our second term and OPD is the teacher construction. While OPD relies on an external strong model, \ours adopts a self-distillation framework where the reference-conditioned policy $\pi_{\theta_{\text{old}}}(\cdot\mid x, y^\star,y_{<t})$ serves as the teacher.
% This works because conditioning on the reference answer $y^\star$ makes the posterior policy knowledgeable enough to solve this problem. Moreover, self-distillation design eliminates external teacher models, enhancing the generality and accessibility of \ours.
% Nevertheless, employing a stronger external teacher may improve token-level guidance, which we leave for future work.
% As a result, \ours offer a natural combination of RLVR and OPD.

\section{Experiment}
\subsection{Experimental Setup}
\label{sec:exp_setup}
\paragraph{Training and evaluation.}
During training, we conduct experiments on two base models: the general-purpose Qwen3-Base-4B~\cite{yang2025qwen3} and the domain-specific Qwen2.5-Math-7B~\cite{yang2024qwen2}. We train models on the MATH dataset~\cite{hendrycks2021measuring} with 7.5k problems. We implement \ours based on GRPO~\cite{shao2024deepseekmath}. Full hypeparameters are in Appendix~\ref{apdx:sec:impl}.
% During training, we use a prompt batch size of 1024 with 8 rollouts per prompt, and set the sampling temperature to 1.0. The maximum context length is set to 4096 for Qwen2.5-Math-7B and 8192 for Qwen3-Base-4B. The combination strength $\alpha^{+}$ and $\alpha^{-}$ are both set to 0.025. We perform policy updates with a mini-batch size of 256 and a learning rate of 1e-6. The KL penalty coefficient between the policy and reference models is set to 1e-3, and the PPO clip ratio is set to 0.2. We conduct all experiments on 8 A800 GPUs. The additional forward pass required to calculate reference-guided generation probabilities increases training time by approximately 20\%.
We evaluate on both in-domain and out-of-domain benchmarks.
In-domain math reasoning tasks include MATH500 \cite{hendrycks2021measuring}, AIME2024/2025~\cite{maa2025aime}, and AMC23~\cite{maa2023amc}.
For out-of-domain evaluation, we use GPQA \cite{rein2024gpqa}, MMLU-Pro \cite{wang2024mmlupro}, and Knowlogic \cite{zhan2025knowlogic}, covering knowledge-driven and logical reasoning tasks. 
% We use Pass@16 as our evaluation metrics.
In inference, we set temperature to 0.6 and top-p to 0.95 
for sampling, and report Pass@16 as our primary evaluation metric.

\textbf{Baselines.}
We compare \ours with two baseline categories~\footnote{We also compare \ours with SFT and distillation-based methods in 
Appendix~\ref{sec:apdx:sft}, where all methods utilize the same reference answer during training for fair comparison.}:
 \textbf{Standard RLVR methods} including \textit{GRPO}~\cite{shao2024deepseekmath} and \textit{DAPO}~\cite{yu2025dapo}.
\textbf{Entropy-intervened methods} that incorporates entropy into GRPO: (1) \textit{Entropy-Tokens} \cite{wang2025beyond}, which optimizes tokens only in the top 20\% entropy range; (2) \textit{Entropy-Adv} \cite{cheng2025reasoning}, which directly augments the advantage with an entropy term; (3) \textit{EM-RL-token} \cite{agarwal2025unreasonable}, directly uses negative entropy as rewards in RL; (4) \textit{RL-ZVP} \cite{le2025no}, which applies entropy regularization exclusively to zero-advantage prompts for better data utilization; and (5) \textit{IB-reg} \cite{lei2025revisiting}, which modulates entropy regularization based on advantage values.

\subsection{Main Results}
\label{sec:main_results}

\begin{table*}[t]
\centering
\caption{ Qwen2.5-Math-7B experiments with PASS@16 on math and out-domain tasks. Bold numbers denote the best results per task.}
\label{tab:qwen2.5}
\adjustbox{max width=\textwidth}{
\setlength{\tabcolsep}{1mm}{
\begin{tabular}{l||cccc|ccc|c}
\toprule
\textbf{Methods} & \textbf{MATH-500} & \textbf{AIME2024} & \textbf{AIME2025} & \textbf{AMC2023} & \textbf{GPQA}  & \textbf{MMLU-PRO} & \textbf{KnowLogic} & \textbf{Avg.}\\
\midrule
Original & 87.8 & 43.3 & 30.0 & 85.0 & \textbf{70.5} & 63.4 & 55.8 & 62.3 \\
GRPO & 88.2 & 43.3 & 30.0 & 82.5 & 64.5 & 60.7 & 68.2 & 62.5 \\
DAPO & 88.2 &43.3 &33.3 & 90.0 & 66.3 & 64.6& 61.1 & 63.8\\
% GSPO \\
Entropy-Tokens & 85.0 & 53.3 & 26.7 & 77.5 & 58.5 & 54.8 & 66.2 & 60.3\\
% grpo +first20 & 88.00 & 53.33 & 33.33 & 85.00 & 57.14 \\
Entropy-Adv & 84.8 & 43.3 & 33.3 & 77.5 & 55.4 & 53.9 & 68.2 & 59.5\\
EM-RL-token & 88.8 & 36.7 & 30.0 & 90.0 & 60.8  & 64.0 & 66.3 & 62.4\\
RL-ZVP & 86.6 & 50.0 & 30.0 & 77.5 & 56.7 & 56.0 & 68.2 & 60.7\\
IB-reg & 88.0 & 60.0 & 36.7 & 80.0 & 58.7 & 55.3 & 68.5 & 63.9\\

% GCPO & 88.4 & 53.3 & 36.7 & 85.0 & 61.4 & 59.2 & 64.3 & 64.0\\
% PRIME & 87.8  &60.0 & 33.3 & 87.5 & 52.4 & 53.1 & 65.7 & 62.8\\
% VL-Cogito & 88.8 & 53.3 & 36.7 & 90.0 & 62.6 & 62.1 & 66.7 & 65.7 \\ 
% W-REINFORCE & 88.8 & 53.3 & 30.0 & 85.0 & 68.3 & 50.7 &54.6 & 61.5 \\
\midrule
\ours & \textbf{89.0} & \textbf{60.0} & \textbf{43.3} & \textbf{92.5} & 68.8 & \textbf{69.0} & \textbf{68.9} & \textbf{70.2}\\
\bottomrule
\end{tabular}}}
\vspace{-5pt}
\end{table*}

We present comparative results in Table~\ref{tab:qwen2.5} and Table~\ref{tab:qwen3} for Qwen2.5-math-7B and Qwen3-Base-4B.
\ours achieves superior performance across both backbones, outperforming GRPO by 7.7\% and 8.5\%, respectively.
For in-domain math reasoning, \ours yields particularly notable gains on AIME2025, the most challenging benchmark requiring high per-step accuracy over long reasoning chains. This validates that inducing fine-grained supervision signals is particularly well-suited for complex reasoning-intensive tasks.
More strikingly, while baselines show substantial out-of-domain degradation (except KnowLogic on Qwen2.5-Math-7B), \ours preserves and even improves generalization, exceeding Qwen3-Base-4B, on GPQA and MMLU-PRO by 1.5\% and 2.8\%, respectively.
This likely stems from its self-contrastive correctness signals that avoid external preference biases, while token-level advantage shaping reduces overfitting by targeting critical tokens rather than penalizing all tokens uniformly. Due to computational constraints, we report mean $\pm$ std over 3 independent runs for CPO, GRPO and  the strongest baseline DAPO in Appendix~\ref{sec:apdx:statis},  showing that CPO consistently outperforms both baselines with improvements well beyond 
one standard deviation.

\subsection{Unifying OPD into CPO via Posterior Design}
\label{sec:opd_to_cpo}
\begin{wrapfigure}{r}{0.5\linewidth}
    \centering
    \vspace{-14pt}
    \includegraphics[width=1\linewidth]{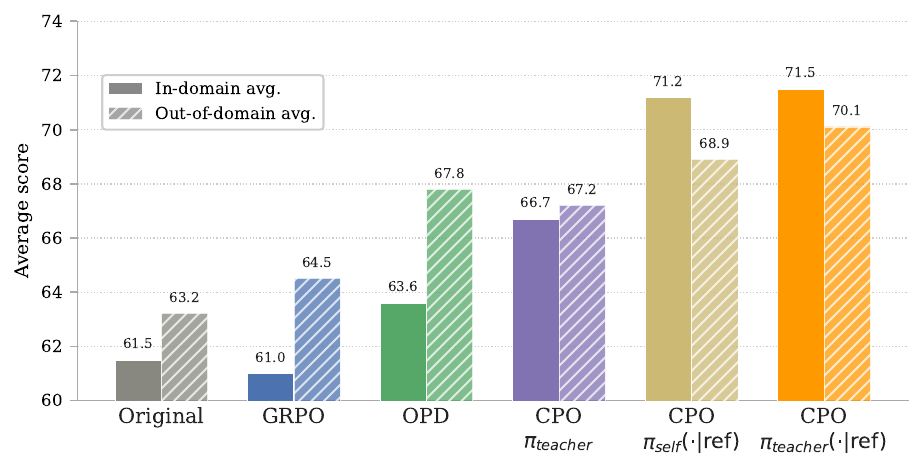}
    \vspace{-20pt}
    \caption{Comparison of different posterior designs ($\pi_{\text{post}}$) under a unified CPO framework.}
    \label{fig:opd}
    \vspace{-7pt}
\end{wrapfigure}

We study different instantiations of $\pi_{\text{post}}$ under the CPO framework in Figure~\ref{fig:opd}. OPD employs a stronger teacher, \textit{Qwen2.5-Math-72B}, to guide the student via a reverse KL objective. OPD clear out-of-domain gains but limited in-domain improvements.
We then consider three variants: (1) \textbf{CPO}-${\pi_{\text{teacher}}}$, using the teacher directly as $\pi_{\text{post}}$; (2) \textbf{CPO}-$\pi_{\text{self}}(\cdot|ref)$, our method that conditions the current policy on reference answers; and (3) \textbf{CPO}-$\pi_{\text{teacher}}(\cdot|ref)$, conditioning the teacher on reference answers. \textbf{CPO}-$\pi_{\text{teacher}}$ underperforms the reference-based variant, suggesting that teacher guidance alone show weaker instance-level correctness signals. In contrast, reference conditioning provides more direct correctness to serve as $\pi_{\text{post}}$. Combining both yields the best performance, where \textbf{CPO}-$\pi_{\text{teacher}}(\cdot|ref)$) consistently achieves the strongest results, indicating that teacher generalization and reference-based correctness signals are complementary.

\begin{table*}[t]
\centering
\caption{Qwen3-Base-4B experiments with PASS@16 on math and out-domain tasks.  }
\label{tab:qwen3}
\adjustbox{max width=\textwidth}{
\setlength{\tabcolsep}{1mm}{
\begin{tabular}{l||cccc|ccc|c}
\toprule
\textbf{Model} & \textbf{MATH-500} & \textbf{AIME2024} & \textbf{AIME2025} & \textbf{AMC2023} & \textbf{GPQA}  & \textbf{MMLU-PRO} & \textbf{KnowLogic} & \textbf{Avg.}\\
\midrule
Original & 87.2 & 23.3 & 30.0 & 90.0 & 76.8 &79.8 & \textbf{94.1} & 68.7\\
GRPO & 91.0 & 53.3 & 40.0 & 92.5 & 56.7 &60.7 & 85.1 & 68.5\\
DAPO & 87.2 & 43.3 &36.6 &92.5 & 75.6 &79.0& 86.7  & 71.6\\
% GSPO \\
Entropy-Tokens & 90.1 & 53.3 & 46.7 & 90.0 & 61.8 &62.5 & 85.4 & 70.0 \\
Entropy-Adv & 88.8 & 50.0 & 40.0 & 85.0 & 57.6 &59.5& 83.6 & 66.4\\
EM-RL-token & 88.6 & 53.3 & 46.7 & 85.0 & 58.9 &62.6& 82.6 & 68.2\\
RL-ZVP & 91.0 & \textbf{56.7} & 40.0 & 92.5 & 64.5 & 62.6 & 85.2 & 70.4\\
IB-reg & 91.4 & 53.3 & 46.7 & 90.0 & 62.1 & 60.9 & 84.0 & 69.8\\

% GCPO & 91.6 & 46.7 & 50.0 & \textbf{95.0} & 70.1 & 70.2 & 85.6 & 72.7 \\
% PRIME& 90.0 & 43.3 & 33.3 & 90.0 & 76.6 & 79.3 \\
% VL-Cogito & 91.2 & 46.7 & 46.7 & 90.0 & 68.8 & 77.8 & 84.7 & 72.3\\ 
\midrule
\ours & \textbf{91.6} & 53.3 & \textbf{53.3} & 92.5 & \textbf{78.3} & \textbf{82.6} & 87.6  & \textbf{77.0}\\
\bottomrule
\end{tabular}}}
\vspace{-9pt}
\end{table*}

\subsection{Ablation Study}
\label{sec:ablation}

\begin{table*}[t]
\centering
\caption{Ablation study of \ours. Full \ours achieves the best in-domain and out-domain performance.}
\label{tab:ablation}
\adjustbox{max width=\textwidth}{
\setlength{\tabcolsep}{1mm}{
\begin{tabular}{l||cccc|ccc}
\toprule
\textbf{Methods} & \textbf{MATH-500} & \textbf{AIME2024} & \textbf{AIME2025} & \textbf{AMC2023} & \textbf{GPQA}  & \textbf{MMLU-PRO} & \textbf{KnowLogic} \\
\midrule
% Original & 87.8 & 43.3 & 30.0 & 85.0 & 70.5 & 63.4 & 55.8\\
\ours & 89.0 & 60.0 & 43.3 & 92.5 & 68.8 & 69.0 & 68.9\\
\midrule
\multicolumn{8}{c}{\small \texttt{Data Effectiveness}}\\
\midrule
$\texttt{CPO}_{neg}$: Shaping incorrect responses  &  89.6 & 60.0 & 40.0 & 92.5 & 65.4 & 64.8 & 68.5\\
$\texttt{CPO}_{pos}$: Shaping correct responses & 88.6 & 56.6 & 33.3 & 92.5 & 66.1 & 66.3 & 68.7\\
\midrule
\multicolumn{8}{c}{\texttt{Design of Reference-guided Prompt}}\\
\midrule
2-turn gold-shot & 89.4 & 50.0 & 40.0 & 90.0 & 65.6 & 62.9 & 66.8\\
2-turn 1-shot & 88.6 & 50.0 & 36.6 & 90.0 &60.0 & 59.3 & 66.1\\
\midrule
\multicolumn{8}{c}{\texttt{Design of advantage}}\\
\midrule
Weighted disagreement - $\pi_{\text{post}}\log \frac{\pi_{\text{post}}}{\pi}$ & 89.2 & 50.0 & 33.3 & 90.0 & 67.6 & 68.1 &69.3\\
Traj-level advantage - $A_i + \frac{1}{|y|}\sum_{t=1}^{|y|}\delta_{i,t}$& 89.6 & 53.3 & 43.3 &  95 & 56.5 & 61.8 & 66.2 \\
w/o advantage clip & 90.2 & 50.0 & 26.6 &92.5 & 67.6 & 67.8 & 65.5\\
Disagreement regularization~\cite{lin2025ravr} &90.2 & 50.0 & 26.6 &85.0 &60.0 & 63.1 & 62.5 \\
\midrule
\bottomrule
\end{tabular}}}
\vspace{-8pt}
\end{table*}

We conduct ablation studies to investigate the components design in \ours in Table~\ref{tab:ablation}. We focus on three key aspects: \textit{data effectiveness}, \textit{reference-guided prompt design}, and \textit{advantage design}.

\textbf{Data Effectiveness.} We examine how response correctness affects \ours. Applying \ours  only on incorrect responses maintains most in-domain math reasoning but reduces out-of-domain generalization (e.g., $-$4.2\% on MMLU-PRO). In contrast to the incorrect-only setting,  shaping only correct responses shows the opposite trend.
This suggests that negative examples primarily drive in-domain improvement by targeting the model's weaknesses. Meanwhile, positive examples play a crucial role in maintaining out-of-domain capabilities. A plausible explanation is that shaping positive examples encourages the model to explore diverse solution paths, thereby preventing overfitting to narrow reasoning patterns and preserving broader generalization.

\textbf{Reference-guided Prompt Design.}
Besides the default single-turn first-person prompt, we test two alternatives: (1) \textit{2-turn gold-shot}, a two-turn structure where the question-reference pair forms the first turn and the model's response the second; (2) Motivated by the few-shot prompting~\cite{brown2020language}, \textit{2-turn 1-shot}, which replaces the first turn with a fixed training question-reference pair. 
% Specifically, the python format is: \texttt{[\{"role": "user", "content": question\}, \{"role": "assistant", "content": reference\}, \{"role": "user", "content": question\}, \{"role": "assistant", "content": model\_response\}]}.
Full prompt formats are in Appendix~\ref{apdx:sec:prompt}.
Results show that our default prompt works best. The improvement of \ours over \textit{2-turn gold-shot} may stem from stronger single-turn problem-solving capabilities in current LLMs.
Meanwhile, \textit{2-turn gold-shot} outperforms \textit{2-turn 1-shot}, suggesting that using the reference to the current question induces more reliable contrastive signals than using an unrelated example.

\begin{figure*}[t]
    \centering
    \includegraphics[width=0.95\linewidth]{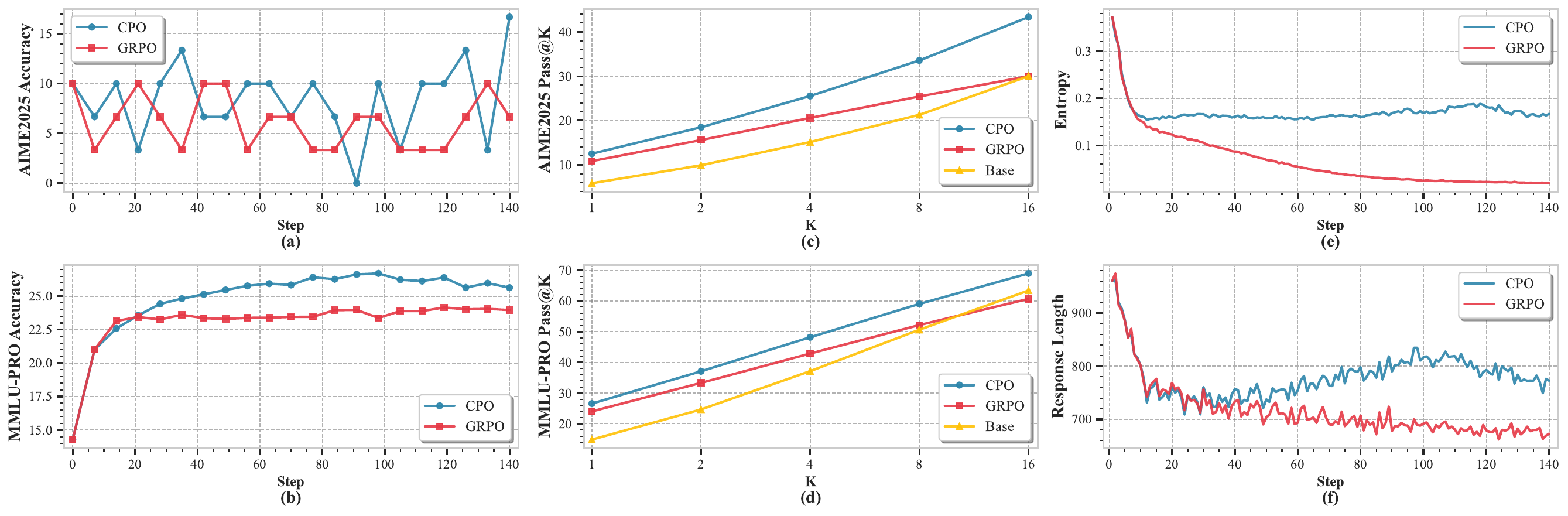}
    \vspace{-15pt}
    \caption{Training dynamics of Qwen2.5-Math-7B under CPO and GRPO. (a-b) Greedy decoding accuracy on in-domain AIME2025 and out-domain MMLU-PRO across training steps; (c-d) Pass@K performance (K=1 to 16) on both test sets, with Qwen2.5-Math-7B results (Base) included for comparison; (e) Training entropy and (f) average response length over the course of training.}
    \label{fig:dynamics}
    % \vspace{-1pt}
\end{figure*}

\begin{figure*}[t]
    \centering
    \includegraphics[width=0.95\linewidth]{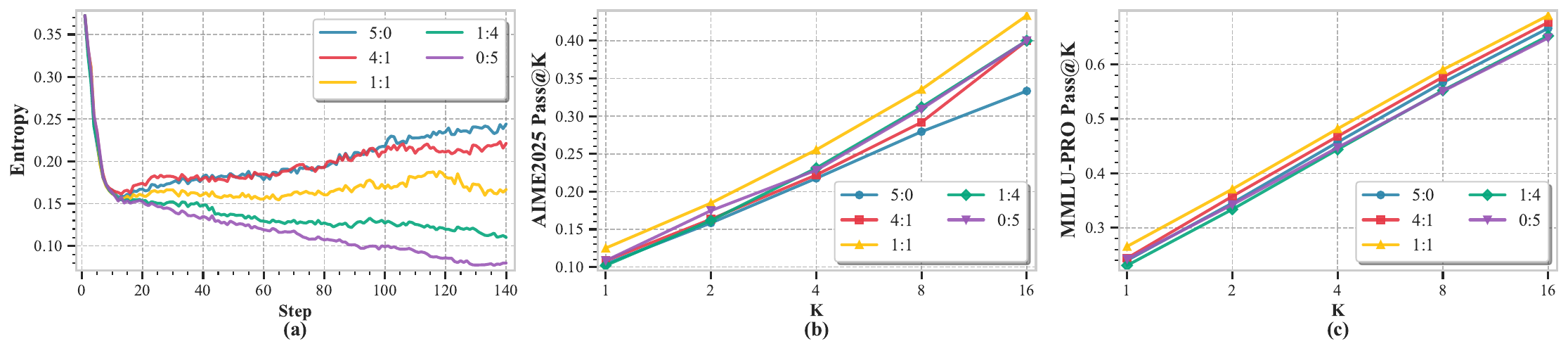}
    \vspace{-15pt}
    \caption{Training dynamics of CPO with different $\alpha^{+}:\alpha^{-}$ ratios. (a) training entropy; (b)/(c)  Pass@$K$ on AIME2025 and MMLU-PRO. }
    \label{fig:cpo_ratio}
    \vspace{-8pt}
\end{figure*}

\textbf{Advantage Design.} We investigate advantage shaping methods to identify the most effective one. 
\begin{itemize}[leftmargin=*,topsep=0pt]
\setlength{\itemsep}{0pt}
\setlength{\parskip}{0pt}
\setlength{\parsep}{0pt}
    \item  \textit{Weighted disagreement} $\pi_{\text{post}}\log \frac{\pi_{\text{post}}}{\pi}$. $\pi_{\text{post}}$ is designed to prevent the Eq.~\ref{eq:delta} from exploding when both $\pi$ and $\pi_{\text{post}}$ are very small ( $\pi \ll \pi_{\text{post}} \ll 0.1$). 
    Compared to the defaulted \ours, the weighting shows a moderate decline. This suggests two findings: (1) extreme cases where both probabilities are rare, making the protective weighting unnecessary; (2)  the weighting suppresses informative cases where $\log \frac{\pi_{\text{post}}}{\pi}$ carries a critical signal.
    \item Motivated by GSPO~\citep{zheng2025group}, we explore trajectory-level advantage shaping by averaging token-level disagreement in a response. But this causes substantial performance drops on out-of-domain tasks compared to our token-level design. This suggests that fine-grained, token-level shaping better preserves crucial variations in token-wise credit assignment and is more robust to distribution shifts.
    \item We remove the advantage clipping in Equation~\eqref{eq:shaped_advantage}. Performance degrades mostly in AIME2024 \& 2025, the most challenging reasoning tasks. This indicates that preserving consistent advantage directions is crucial for difficult problems, where misalignment between token-level and trajectory-level directions can severely disrupt learning.
    \item Instead of using contrastive disagreement for advantage shaping, we follow \citet{lin2025ravr} in treating it as an additional regularization term in the final objective. Results show that the use of contrastive disagreement in \ours is more effective than \citet{lin2025ravr} across most in-domain and out-of-domain tasks.

\end{itemize}

\subsection{In-depth Analysis}
\paragraph{How CPO improves RL training.}
To  understand the learning mechanisms of \ours and how it improves RL training, we analyze the training dynamics of \ours and GRPO in Figure~\ref{fig:dynamics}.
From (a)/(b), we observe that \ours achieves more effective training than GRPO on both in-domain and out-domain tasks. While the difference in greedy decoding accuracy appears modest, (c)/(d) reveal larger gaps in Pass@$K$ performance ($K=$1 to 16). GRPO primarily optimizes for pass@1 performance compared to the Base model~\cite{walder2025passk}, with diminishing gains as $K$ increases and even performance degradation on MMLU-PRO. In contrast, \ours improves both individual sample quality and diversity, with consistent improvements across all $K$ values.
The entropy and response length curves in (e)/(f) provide insight into these differences. \ours converges to a relatively high entropy plateau while producing longer responses, indicating sustained exploration throughout training. GRPO, however, drives entropy near zero, severely constraining the model's exploratory capacity.

\textbf{How response correctness shapes \ours.}
To investigate how correct and incorrect responses contribute to training \ours, we vary the ratio of combination strengths for correct and incorrect responses, i.e., $\alpha^{+}:\alpha^{-}$ in Figure~\ref{fig:cpo_ratio}. A clear pattern emerges: higher ratios of $\alpha^{+}$ yields increasing entropy throughout training. 
This entropy-increasing behavior stems from two factors. First, CPO activates all-correct groups that contribute zero gradient in GRPO, injecting additional learning signal into otherwise wasted data. Second, unlike sequence-level updates that amplify already-dominant tokens and collapse the distribution, token-level updates selectively reward correct-leaning tokens regardless of their prior probability, reinforcing diverse reasoning paths and maintaining a broader output distribution. We also provide a controlled experiment isolating CPO's contribution on zero-advantage prompts in Appendix~\ref{sec:apdx:zero}, where CPO outperforms RL-ZVP by +3.2\% average, confirming the superiority of contrastive disagreement over entropy on zero-advantage problems.
Conversely, incorrect responses focus on exploitation and confident predictions, as evidenced by steadily decreasing entropy for higher $\alpha^{-}$ ratios. 
Downstream performance in (b)/(c) verifies that balanced ratios work best, with 1:1 achieving the highest Pass@K on both AIME2025 and MMLU-PRO, outperforming exploration-heavy (5:0, 4:1) and exploitation-heavy (0:5, 1:4) configurations. 
This suggests that incorrect and correct responses are mutually beneficial: correct responses allow the model to explore novel paths, while incorrect responses take these opportunities to exploit and refine the best paths.

\begin{wrapfigure}{r}{0.5\linewidth}
    \centering
    \vspace{-5pt}
    \includegraphics[width=0.95\linewidth]{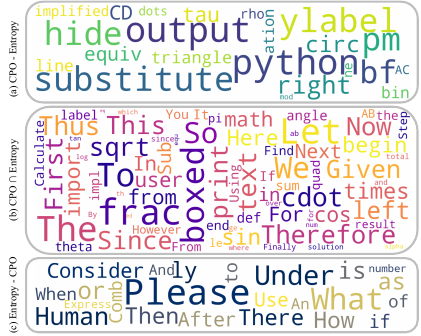}
    \caption{Word clouds of 100 tokens with top absolute disagreement (\ours) and entropy: (a) \textbf{\textit{CPO - Entropy}}: tokens only detected by \ours; (b) \textit{\textbf{CPO $\cap$ Entropy}}: tokens detected by both; and (c) \textit{\textbf{Entropy - CPO}}: tokens detected only by entropy.}
    \label{fig:word_cloud}
    \vspace{-7pt}
\end{wrapfigure}

\textbf{Contrastive disagreement \textit{vs.} entropy.}
% While the correctness-aware contrastive disagreement of \ours is not derived from entropy, the preceding analyses reveal that \ours has a significant impact on entropy. To understand this relationship and identify their key differences, we visualize word clouds of fork tokens for each metric in Figure~\ref{fig:word_cloud}.
% We observe that 76\% of the fork tokens identified by both measures overlap, which is expected since high-entropy tokens often indicate model uncertainty that correlates with potential errors. However, tokens uniquely identified by \ours are execution-oriented, including programming constructs (\texttt{python}, \texttt{output}, \texttt{label}) and precise mathematical symbols (\texttt{tau}, \texttt{equiv}, \texttt{circ}, \texttt{triangle}), reflecting its focus on discriminative features that directly impact correctness. In contrast, entropy uniquely emphasizes discourse markers (\texttt{Please}, \texttt{Consider}, \texttt{Then}) and flexible communicative expressions (\texttt{Human}, \texttt{What}, \texttt{How}), which capture linguistic variability. 
% This distinction makes \ours particularly suitable for RL training: By prioritizing tokens critical to user objectives over linguistic diversity, \ours enables more efficient RL training that directly aligns policy learning with task success.
While \ours is not derived from entropy, prior analyses show that it strongly impacts entropy. To understand their relationship and key differences, we show word clouds of fork tokens in Figure~\ref{fig:word_cloud}.
76\% of fork tokens overlap between both measures. This is expected as high-entropy tokens indicate uncertainty, which typically correlates with potential errors. However, \ours uniquely identifies execution-oriented tokens, including programming constructs (\texttt{python}, \texttt{output}) and mathematical symbols (\texttt{tau}, \texttt{equiv}), reflecting focus on correctness-critical features. Entropy emphasizes discourse markers (\texttt{Please}, \texttt{Consider}) and communicative expressions (\texttt{Human}, \texttt{What}), capturing linguistic variability.
This distinction makes \ours ideal for RL training: prioritizing task-critical tokens over linguistic diversity enables efficient RL training that closely aligns policy learning with task success.

\section{Related Work}
\textbf{Entropy-Intervened RL.}
Using entropy in RL was first introduced by~\citet{williams1991function} to encourage exploration and prevent premature convergence. Recently, several works have incorporated entropy into RLVR objectives. \citet{deng2025decomposing} analyze entropy dynamics during training, establishing a foundation for entropy-based interventions.
A straightforward approach is entropy regularization, adding entropy terms to the RLVR objective~\cite{shen2025entropy,wang2025arbitrary}. \citet{lei2025revisiting} extend this with bi-directional regularization to prevent both entropy collapse and explosion. However, these methods can suffer from instability~\cite{hao2025rethinking}, motivating approaches that focus on entropy changes rather than absolute values~\cite{hao2025rethinking,wu2025quantile}.
Another line of work uses entropy for advantage shaping, including awarding high entropy for exploration~\cite{cheng2025reasoning}, suppressing entropy to consolidate well-performing patterns~\cite{agarwal2025unreasonable}, and hybrid approaches~\cite{wang2025harnessing,le2025no}.
Beyond regularization and advantage shaping, entropy enables additional control mechanisms. \citet{wang2025beyond} and \citet{zheng2025first} use entropy to identify critical fork tokens or chunks to narrow training focus.
\citet{sheng2025espo} employ it to control trust regions during updates.

\textbf{Fine-grained Correctness Signals.}
Recent work explores fine-grained rewards to address sparse reward problem in RLVR~\cite{li-etal-2025-red}. \citet{lightman2024lets,wu2023fine} provide step-level correctness but require intensive human labor. Tree-based search methods automate step-level annotation~\cite{yao2023tree,chia-etal-2024-reasoning,wang-etal-2024-math,jiao-etal-2024-learning,li-etal-2025-elicit}, but the repeated sampling makes them computationally impractical in RL settings.
Recent advances in on-policy distillation (OPD) enable students to learn from teacher 
feedback on self-generated outputs~\cite{agarwal2024onpolicy}, enabling more effective learning.
To eliminate external teacher dependency, self-distillation methods have emerged. Rejection sampling uses output rewards to select superior responses~\cite{touvron2023llama,yang-etal-2024-self}, while others simulate synthetic teachers by conditioning the model on critics~\cite{xu-etal-2024-reasons,hubotter2026reinforcement}, reference~\cite{lin2025ravr,shenfeld2026self}  or any other correctness-informed information~\cite{wang2026openclaw}.

\section{Conclusion}
We propose Contrastive Policy Optimization (\ours), a novel framework that leverages token-level contrastive disagreement between reference-guided and vanilla generation as a correctness-aware signal for advantage shaping in RLVR. Both theoretical and empirical results demonstrate that this contrastive disagreement reliably indicates token-level correctness. \ours achieves significant improvements on challenging math reasoning tasks while maintaining out-of-domain generalization compared to entropy-based methods.
Our analysis reveals that \ours outperforms GRPO by better preserving the exploration capability of the backbone model. This advantage stems from the distinct roles of different response types: correct responses drive exploration of novel solutions, while incorrect responses exploit the most promising paths. Balance both leads to superior performance.
\bibliography{example_paper}
\bibliographystyle{icml2026}

%%%%%%%%%%%%%%%%%%%%%%%%%%%%%%%%%%%%%%%%%%%%%%%%%%%%%%%%%%%%%%%%%%%%%%%%%%%%%%%
%%%%%%%%%%%%%%%%%%%%%%%%%%%%%%%%%%%%%%%%%%%%%%%%%%%%%%%%%%%%%%%%%%%%%%%%%%%%%%%
% APPENDIX
%%%%%%%%%%%%%%%%%%%%%%%%%%%%%%%%%%%%%%%%%%%%%%%%%%%%%%%%%%%%%%%%%%%%%%%%%%%%%%%
%%%%%%%%%%%%%%%%%%%%%%%%%%%%%%%%%%%%%%%%%%%%%%%%%%%%%%%%%%%%%%%%%%%%%%%%%%%%%%%
\newpage
\appendix
\onecolumn
\section{Limitations}
\label{sec:apdx:limitations}
The primary computational overhead of \ours stems from the additional reference-guided 
forward pass required to compute the posterior likelihood $\pi_{\mathrm{post}}$, which 
increases training time by approximately 20\% compared to standard GRPO. One promising 
direction to mitigate this is to decouple the reference-guided forward pass from the 
main training loop via asynchronous computation, at the cost of slightly stale posterior estimates that introduce a degree of off-policy bias.

Additionally, \ours currently relies on the availability of reference answers to 
construct the correctness-informed distribution $\pi_{\mathrm{post}}$, which limits its applicability to settings with verifiable rewards. However, as discussed in 
Section~\ref{sec:method}, reference answers are just one instantiation of a 
correctness-informed distribution. A natural extension is to replace reference-guided 
prompting with other contextual signals that can serve the same role without requiring 
ground-truth answers, such as critic feedback that judges the quality of the model's own outputs, or process-level annotations that identify reasoning errors at intermediate steps. Exploring such alternatives would broaden the applicability of \ours to open-ended generation and subjective reasoning tasks, where reference answers are unavailable.

\section{Impact Statements}
\label{sec:apdx:impact}
The primary goal of this work is to improve the accuracy of credit assignment during reinforcement learning. This method has the potential for social impact in domains requiring high accuracy, such as industrial workflow agent and scientific assistance. While improving reasoning capabilities is generally beneficial, we acknowledge that advanced models carry inherent risks related to dual-use potential and the amplification of biases present in training data. We do not foresee specific negative consequences unique to this method beyond these broader considerations that apply to general-purpose LLM development.

\section{Theoretical Analysis of Contrastive Generation Disagreement}
\label{apdx:sec:theory}

Let $x$ denote a prompt/question and $y=(y_1,\dots,y_T)$ denote an autoregressive rollout from the \emph{prior} (vanilla) policy
\begin{equation}
\pi(y\mid x)\;=\;\prod_{t=1}^T \pi_{\theta}(y_t\mid x,y_{<t}).
\end{equation}
A verifiable environment returns a binary reward
\begin{equation}
R(x,y)\in\{-1,1\},
\end{equation}
indicating whether the final answer in $y$ is correct. GRPO optimizes the standard RLVR objective
\begin{equation}
J(\theta)\;=\;\mathbb{E}_{x}\,\mathbb{E}_{y\sim \pi(\cdot\mid x)}\big[ R(x,y)\big].
\end{equation}

\begin{definition}[Oracle-defined correctness event]
\label{def:correctness_event}
For each $x$, assume there exists oracle ground-truth information (e.g., a reference answer/solution) that uniquely determines correctness for any candidate completion $y$.
Let
\begin{equation}
C \;:=\; \{ R(x,Y)=1 \}
\end{equation}
denote the event that a random completion $Y$ is correct \emph{with respect to the oracle}. Note that $C$ is oracle-defined and model-agnostic; the model only induces a distribution over $Y$.
\end{definition}

\begin{definition}[Position-wise oracle correctness probability]
\label{def:q}
For any $x$, prefix $y_{<t}$, and candidate token value $y_t$, define
\begin{equation}
g(x,y_{<t},y_t)
\;:=\;
\mathbb{P}\!\left( C \,\middle|\, X=x,\; Y_{<t}=y_{<t},\; Y_t=y_t \right).
\end{equation}
Equivalently, $g(x,y_{<t},y_t)$ is the probability (over all possible future continuations after choosing $y_t$ at position $t$) that the final completion is correct.
\end{definition}

\begin{definition}[Reference-guided posterior scoring and disagreement]
\label{def:delta}
Let $y^\star(x)$ denote the reference ground-truth,
we define a \emph{reference-guided} policy by conditioning the same model on $x\oplus y^\star(x)$:
\begin{equation}
\pi_{\mathrm{post}}(y\mid x)
\;=\;
\prod_{t=1}^T \pi_{\theta}(y_t\mid x,y^\star(x),y_{<t}).
\end{equation}
Given a rollout $y\sim \pi (\cdot\mid x)$ (which may be correct or incorrect), CPO re-scores the same tokens under $\pi_{\mathrm{post}}$ and defines the token-wise \emph{contrastive generation disagreement}
\begin{equation}
\delta_t(x,y)
\;:=\;
\log \pi_{\mathrm{post}}(y_t\mid x,y^\star(x),y_{<t})
\;-\;
\log \pi(y_t\mid x,y_{<t}).
\end{equation}
\end{definition}

\begin{definition}[Ideal correctness-conditioned token distribution]
\label{def:tildepi}
Define the \emph{ideal} next-token distribution of the prior policy conditioned on oracle correctness:
\begin{equation}
\tilde{\pi}_{\mathrm{post}}(y_t\mid x,y_{<t})
\;:=\;
\mathbb{P}\!\left( Y_t=y_t \,\middle|\, X=x,\; Y_{<t}=y_{<t},\; C \right).
\end{equation}
This distribution is uniquely determined by $\pi$ and the oracle-defined event $C$.
\end{definition}

\begin{lemma}[Bayes reweighting under oracle correctness]
\label{lem:bayes_reweighting}
For any $x$, prefix $y_{<t}$, and token $y_t$,
\begin{equation}
\tilde{\pi}_{\mathrm{post}}(y_t\mid x,y_{<t})
\;=\;
\frac{
\pi(y_t\mid x,y_{<t})\; g(x,y_{<t},y_t)
}{
Z(x,y_{<t})
},
\;\;\;
Z(x,y_{<t})
\;:=\;
\sum_{b}
\pi(b\mid x,y_{<t})\; g(x,y_{<t},b).
\end{equation}
Equivalently, $Z(x,y_{<t})=\mathbb{P}(C\mid X=x, Y_{<t}=y_{<t})$ is the prior-average correctness probability at that prefix.
\end{lemma}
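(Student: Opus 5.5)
The plan is to work entirely inside the probability space induced by the prior policy: $Y\sim\pi(\cdot\mid x)$ is generated autoregressively, and $C=\{R(x,Y)=1\}$ is a deterministic function of $(x,Y)$ by Definition~\ref{def:correctness_event}. Both sides of the claimed identity are then conditional probabilities in one joint law, so the lemma should follow from Bayes' rule together with the law of total probability.

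First, write $\tilde{\pi}_{\mathrm{post}}$ from Definition~\ref{def:tildepi} as a ratio:
\[
\tilde{\pi}_{\mathrm{post}}(y_t\mid x,y_{<t})=\frac{\mathbb{P}(Y_t=y_t,\,C\mid X=x,Y_{<t}=y_{<t})}{\mathbb{P}(C\mid X=x,Y_{<t}=y_{<t})}.
\]
Second, factor the numerator with the chain rule as $\mathbb{P}(Y_t=y_t\mid x,y_{<t})\,\mathbb{P}(C\mid x,y_{<t},Y_t=y_t)$. Under the autoregressive prior, the first factor is exactly $\pi(y_t\mid x,y_{<t})$. The second factor is $g(x,y_{<t},y_t)$ by Definition~\ref{def:q}.

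Third, expand the denominator by the law of total probability over the value $b$ of $Y_t$:
\[
\mathbb{P}(C\mid x,y_{<t})=\sum_b \pi(b\mid x,y_{<t})\,g(x,y_{<t},b)=Z(x,y_{<t}).
\]
This step simultaneously proves the stated identification $Z=\mathbb{P}(C\mid X=x,Y_{<t}=y_{<t})$.

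The only real subtlety is well-definedness, which concerns conditioning on null events. I would therefore restrict to prefixes with $\mathbb{P}(Y_{<t}=y_{<t})>0$ and $Z(x,y_{<t})>0$, which is what makes $\tilde{\pi}_{\mathrm{post}}$ meaningful in the first place. For tokens with $\pi(y_t\mid x,y_{<t})=0$, $g$ is undefined, but both sides can be read as $0$ by adopting the convention $0\cdot g=0$. Terms with zero prior mass likewise drop out of the sum defining $Z$.

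I would also note that every probability is taken under the prior $\pi$, not under $\pi_{\mathrm{post}}$. This keeps the lemma purely definitional and leaves the approximation $\pi_{\mathrm{post}}\approx\tilde{\pi}_{\mathrm{post}}$ outside its scope.
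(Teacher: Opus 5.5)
Your proposal is correct and follows essentially the same route as the paper's proof: write $\tilde{\pi}_{\mathrm{post}}$ as a ratio by Bayes' rule, factor the numerator into $\pi(y_t\mid x,y_{<t})\,g(x,y_{<t},y_t)$ by the chain rule, and expand the denominator by total probability over $b$ to obtain $Z(x,y_{<t})$. Your remarks on null events ($\mathbb{P}(Y_{<t}=y_{<t})>0$, $Z>0$, and the convention $0\cdot g=0$) are a sound refinement that the paper leaves implicit.
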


\begin{proof}
By Definition~\ref{def:tildepi} and Bayes' rule,
\begin{align}
\tilde{\pi}_{\mathrm{post}}(y_t\mid x,y_{<t})
&=
\frac{
\mathbb{P}(Y_t=y_t, C \mid X=x, Y_{<t}=y_{<t})
}{
\mathbb{P}(C \mid X=x, Y_{<t}=y_{<t})
}.
\end{align}
Factor the numerator:
\begin{align}
\mathbb{P}(Y_t=y_t, C \mid x, y_{<t})
&=
\mathbb{P}(Y_t=y_t \mid x, y_{<t})\;
\mathbb{P}(C \mid x, y_{<t}, Y_t=y_t) \\
&=
\pi(y_t\mid x,y_{<t})\; g(x,y_{<t},y_t),
\end{align}
where the last equality uses Definition~\ref{def:q}. The denominator equals
\begin{align}
\mathbb{P}(C \mid x, y_{<t})
=
\sum_b
\mathbb{P}(Y_t=b \mid x, y_{<t})\;
\mathbb{P}(C \mid x, y_{<t}, Y_t=b)
=
\sum_b
\pi(b\mid x,y_{<t})\; g(x,y_{<t},b),
\end{align}
which defines $Z(x,y_{<t})$. Substituting into the Bayes expression yields the claim.
\end{proof}

\begin{corollary}[Ideal log-ratio equals log-correctness up to a prefix constant]
\label{cor:ideal_logratio}
For any $x,y_{<t},y_t$ with $\pi(y_t\mid x,y_{<t})>0$ and $g(x,y_{<t},y_t)>0$,
\begin{equation}
\log \tilde{\pi}_{\mathrm{post}}(y_t\mid x,y_{<t})
-
\log \pi(y_t\mid x,y_{<t})
=
\log g(x,y_{<t},y_t)
-
\log Z(x,y_{<t}).
\end{equation}
\end{corollary}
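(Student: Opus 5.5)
The corollary follows by taking logarithms in Lemma~\ref{lem:bayes_reweighting}, so the plan is to invoke that lemma directly and check that every quantity whose logarithm appears is strictly positive. Fix $x$, the prefix $y_{<t}$, and the token $y_t$ with $\pi(y_t\mid x,y_{<t})>0$ and $g(x,y_{<t},y_t)>0$. Throughout I implicitly assume the prefix $y_{<t}$ itself has positive probability under $\pi$, which holds for any prefix of a rollout actually sampled from $\pi$. This is needed so that the conditional probabilities in Definitions~\ref{def:q} and~\ref{def:tildepi} are well defined.

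The first step is to show $Z(x,y_{<t})>0$. All terms in $Z(x,y_{<t})=\sum_b \pi(b\mid x,y_{<t})\,g(x,y_{<t},b)$ are nonnegative. The term with $b=y_t$ equals $\pi(y_t\mid x,y_{<t})\,g(x,y_{<t},y_t)$, which is strictly positive by hypothesis. Hence $Z(x,y_{<t})>0$, so the Bayes expression for $\tilde{\pi}_{\mathrm{post}}$ is well defined. This is the only step that needs any care, and it is where both positivity hypotheses are used; they are what make the logarithms finite.

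With $Z>0$, Lemma~\ref{lem:bayes_reweighting} gives
\[
\tilde{\pi}_{\mathrm{post}}(y_t\mid x,y_{<t})=\frac{\pi(y_t\mid x,y_{<t})\,g(x,y_{<t},y_t)}{Z(x,y_{<t})},
\]
and the right-hand side is a ratio of strictly positive reals. In particular $\tilde{\pi}_{\mathrm{post}}(y_t\mid x,y_{<t})>0$, so its logarithm is finite. Taking logarithms and using $\log(ab/c)=\log a+\log b-\log c$ for positive $a,b,c$ gives
\[
\log\tilde{\pi}_{\mathrm{post}}(y_t\mid x,y_{<t})=\log\pi(y_t\mid x,y_{<t})+\log g(x,y_{<t},y_t)-\log Z(x,y_{<t}).
\]
Subtracting the finite quantity $\log\pi(y_t\mid x,y_{<t})$ from both sides yields exactly the claimed identity. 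No real obstacle is expected beyond the positivity bookkeeping above.
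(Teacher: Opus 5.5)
Your proposal is correct and follows the paper's proof exactly: take logarithms in Lemma~\ref{lem:bayes_reweighting} and subtract $\log \pi(y_t\mid x,y_{<t})$. Your additional checks, $Z(x,y_{<t})\ge \pi(y_t\mid x,y_{<t})\,g(x,y_{<t},y_t)>0$ and that the prefix has positive probability, make explicit the well-definedness conditions that the paper leaves implicit.
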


\begin{proof}
Take logarithms on both sides of Lemma~\ref{lem:bayes_reweighting} and subtract $\log \pi(y_t\mid x,y_{<t})$.
\end{proof}

\begin{assumption}[Reference answer encodes correctness information]
\label{assump:hint_proxy}
In practice, we do not have direct access to the ideal distribution $\tilde{\pi}_{\mathrm{post}}$. 
However, we can condition the model on the oracle reference $y^\star$ (e.g.,  prompting the model to refine based on $y^\star(x)$) to obtain $\pi_{\mathrm{post}}(y_t\mid x,y^\star(x),y_{<t})$.
Since $y^\star$ encodes the ground-truth that determines correctness and the prompt also asks the model to refine, conditioning on such prompt serves as a practical proxy for conditioning on the correctness event $C$,  i.e., $\pi_{\mathrm{post}}(\cdot\mid x,y^\star(x),y_{<t}) \approx \tilde{\pi}_{\mathrm{post}}(\cdot\mid x,y_{<t})$.
% Let $\mathcal{O}(x)$ denote oracle information that fully determines correctness for question $x$ (e.g., a reference solution/answer).
% Assume the hint is an encoding of the oracle, $h(x)=\phi(\mathcal{O}(x))$, and that conditioning the model on $h(x)$ primarily injects this oracle correctness information rather than unrelated stylistic bias.
% Consequently, the hint-guided token distribution $\pi_{\mathrm{post}}(\cdot\mid x,h(x),y_{<t})$ serves as a practical proxy for the ideal correctness-conditioned distribution $\tilde{\pi}_{\mathrm{post}}(\cdot\mid x,y_{<t})$.
\end{assumption}

\paragraph{Justification (empirical).}
We implement this proxy via teacher-forcing: for a prior rollout $y\sim \pi(\cdot\mid x)$, we re-compute $\log \pi_{\mathrm{post}}(y_t\mid x,y^\star(x),y_{<t})$ on the same tokens. Empirically, in Sec.~\ref{sec:emp_study}, large negative disagreements are concentrated on incorrect trajectories and align with error-causing steps, supporting $\pi_{\mathrm{post}}(\cdot\mid x,y^\star(x),y_{<t})$ as an effective approximation to $\tilde{\pi}_{\mathrm{post}}(\cdot\mid x,y_{<t})$.

\begin{theorem}[Contrastive generation disagreement indicates incorrectness]
\label{thm:disagreement_incorrectness}
Fix $x$ and a prefix $y_{<t}$. Define the \emph{ideal} disagreement for token $y_t$ as
\begin{equation}
\tilde{\delta}_t(x,y_{<t},y_t)
\;:=\;
\log \tilde{\pi}_{\mathrm{post}}(y_t\mid x,y_{<t})
-
\log \pi(y_t\mid x,y_{<t}).
\end{equation}
Then, for any $y_t$ with $\pi(y_t\mid x,y_{<t})>0$ and $g(x,y_{<t},y_t)>0$,
\begin{equation}
\tilde{\delta}_t(x,y_{<t},y_t)
=
\log g(x,y_{<t},y_t)
-
\log Z(x,y_{<t}),
\end{equation}
where $Z(x,y_{<t})$ is given in Lemma~\ref{lem:bayes_reweighting}. In particular, for fixed $(x,y_{<t})$, $\tilde{\delta}_t(x,y_{<t},y_t)$ is a strictly increasing function of $g(x,y_{<t},y_t)$, and
\begin{equation}
\begin{split}
\tilde{\delta}_t(x,y_{<t},y_t) < 0
\quad\Longleftrightarrow\quad
g(x,y_{<t},y_t) < Z(x,y_{<t}),\\
\tilde{\delta}_t(x,y_{<t},y_t) > 0
\quad\Longleftrightarrow\quad
g(x,y_{<t},y_t) > Z(x,y_{<t}).
\end{split}
\end{equation}
Therefore, negative ideal disagreement identifies \emph{incorrect-leaning} tokens whose oracle correctness probability is below the prior-average correctness at that prefix.
\end{theorem}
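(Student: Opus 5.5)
The identity is exactly Corollary~\ref{cor:ideal_logratio}. By definition, $\tilde{\delta}_t(x,y_{<t},y_t)$ is the left-hand side of that corollary. So for any $y_t$ with $\pi(y_t\mid x,y_{<t})>0$ and $g(x,y_{<t},y_t)>0$ we get
\[
\tilde{\delta}_t(x,y_{<t},y_t)=\log g(x,y_{<t},y_t)-\log Z(x,y_{<t}),
\]
and this needs nothing beyond Lemma~\ref{lem:bayes_reweighting}. Before taking logarithms I will check that $Z(x,y_{<t})>0$. By Lemma~\ref{lem:bayes_reweighting}, $Z(x,y_{<t})=\sum_b \pi(b\mid x,y_{<t})\,g(x,y_{<t},b)$. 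This sum contains the term $\pi(y_t\mid x,y_{<t})\,g(x,y_{<t},y_t)>0$ and all its terms are nonnegative, so $Z>0$ and $\log Z$ is finite. The same computation shows $\tilde{\pi}_{\mathrm{post}}(y_t\mid x,y_{<t})>0$, so the left-hand side is well defined.

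For monotonicity I fix $(x,y_{<t})$. Then $Z(x,y_{<t})$ is a constant that does not depend on which $y_t$ we evaluate. Hence $\tilde{\delta}_t$, viewed as a function of the value $u=g(x,y_{<t},y_t)\in(0,1]$, equals $\log u - \log Z$. This is strictly increasing in $u$ because $\log$ is strictly increasing on $(0,\infty)$.

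The two equivalences follow from the same fact. We have $\log g-\log Z<0$ iff $\log g<\log Z$, and since $\log$ is strictly monotone this holds iff $g<Z$. The ``$>$'' case is identical. The final sentence of the theorem is then just a reading of the first equivalence. Since $Z(x,y_{<t})=\mathbb{P}(C\mid x,y_{<t})$ by Lemma~\ref{lem:bayes_reweighting}, $g<Z$ says that the oracle correctness probability of $y_t$ is below the prior-average correctness at that prefix.

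There is no real obstacle here. The only points that need care are the positivity hypotheses, which justify the logarithms and guarantee $Z>0$, and the fact that $Z$ does not depend on $y_t$, which is what turns the identity into a monotonicity statement. I would also note that tokens with $g=0$ are excluded by hypothesis; for them $\tilde{\pi}_{\mathrm{post}}(y_t\mid x,y_{<t})=0$, so $\tilde{\delta}_t=-\infty$, which is consistent with the limiting behaviour of the formula.
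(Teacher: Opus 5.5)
Your proposal is correct and follows the paper's proof: the identity is read off from Corollary~\ref{cor:ideal_logratio}, and the monotonicity and sign equivalences come from strict monotonicity of $\log$ (the paper phrases this as exponentiating $\tilde{\delta}_t=\log(g/Z)$). Your explicit checks that $Z(x,y_{<t})>0$ and that $Z$ does not depend on $y_t$ are small but welcome additions that the paper leaves implicit.
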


\begin{proof}
The identity follows immediately from Corollary~\ref{cor:ideal_logratio}.
The sign equivalences follow from exponentiating
$\tilde{\delta}_t(x,y_{<t},y_t)=\log\!\big(g(x,y_{<t},y_t)/Z(x,y_{<t})\big)$.
\end{proof}

\begin{corollary}[Interpretation for CPO scoring on any rollout]
\label{cor:cpo_interpretation}
Let $y\sim \pi(\cdot\mid x)$ be any rollout, which may satisfy $R(x,y)=1$ or $R(x,y)=-1$.
At each position $t$, CPO computes
\[
\delta_t(x,y)
=
\log \pi_{\mathrm{post}}(y_t\mid x,y^\star(x),y_{<t})
-
\log \pi(y_t\mid x,y_{<t}).
\]
Under Assumption~\ref{assump:hint_proxy}, $\delta_t(x,y)$ serves as a practical proxy for the ideal quantity
$\tilde{\delta}_t(x,y_{<t},y_t)$, and thus inherits its correctness-aligned interpretation:
large negative $\delta_t(x,y)$ indicates that the chosen token $y_t$ is suppressed by oracle-informed guidance and is therefore likely to be \emph{incorrect-leaning} at that prefix, which is also empirically validated in Sec.~\ref{sec:emp_study}.
\end{corollary}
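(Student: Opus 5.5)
The plan is to treat Corollary~\ref{cor:cpo_interpretation} as a transfer result: the exact identity of Theorem~\ref{thm:disagreement_incorrectness} is moved from the ideal quantity $\tilde{\delta}_t$ to the computable quantity $\delta_t$ via Assumption~\ref{assump:hint_proxy}. Nothing about the rollout $y$ beyond its prefix $y_{<t}$ and token $y_t$ enters. First I will fix any rollout $y\sim\pi(\cdot\mid x)$ and any position $t$. The realized pair $(y_{<t},y_t)$ satisfies $\pi(y_t\mid x,y_{<t})>0$ automatically, because it was sampled. I will assume $g(x,y_{<t},y_t)>0$, as in the theorem; when $g=0$ the ideal disagreement is $-\infty$, which is the extreme incorrect-leaning case and only strengthens the conclusion. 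Theorem~\ref{thm:disagreement_incorrectness} then gives
\[
\tilde{\delta}_t(x,y_{<t},y_t)=\log g(x,y_{<t},y_t)-\log Z(x,y_{<t}).
\]
This also yields the sign equivalence $\tilde{\delta}_t<0\iff g<Z$. I will stress that $g$ and $Z$ are conditional probabilities given only the prefix and token, not given the realized suffix of $y$. The value $R(x,y)$ is therefore irrelevant, and the statement holds whether $R(x,y)=1$ or $R(x,y)=-1$.

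Second, I will write the CPO score as
\[
\delta_t=\tilde{\delta}_t+\epsilon_t,\qquad \epsilon_t:=\log\pi_{\mathrm{post}}(y_t\mid x,y^\star,y_{<t})-\log\tilde{\pi}_{\mathrm{post}}(y_t\mid x,y_{<t}).
\]
This splits $\delta_t$ into the ideal disagreement plus a proxy error $\epsilon_t$. Assumption~\ref{assump:hint_proxy} says $\epsilon_t\approx 0$. Substituting gives $\delta_t\approx\log(g/Z)$. As a consequence, a sufficiently negative $\delta_t$ (say $\delta_t<-\eta$ with $|\epsilon_t|\le\eta$) forces $\tilde{\delta}_t<0$, and hence $g<Z$: the token is incorrect-leaning at that prefix. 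The statement that such a token is ``suppressed by oracle-informed guidance'' is just the restatement $\pi_{\mathrm{post}}(y_t\mid\cdot)<\pi(y_t\mid\cdot)$ of $\delta_t<0$.

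The main obstacle is the second step. Assumption~\ref{assump:hint_proxy} is qualitative ($\approx$), so the corollary can only be an interpretive claim unless that approximation is made quantitative. I would first state it in the form $\sup_b|\log\pi_{\mathrm{post}}(b\mid\cdot)-\log\tilde{\pi}_{\mathrm{post}}(b\mid\cdot)|\le\eta$, and then show that the conclusion holds robustly for all tokens with $\delta_t<-\eta$. This gives the ``large negative'' qualifier a precise meaning. Finally, I would point to the empirical validation in Sec.~\ref{sec:emp_study} as support for the assumption itself, since the assumption cannot be proved.
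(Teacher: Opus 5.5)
Your proposal is correct and follows the paper's own route. The paper gives no separate proof; it obtains the corollary as you do: Assumption~\ref{assump:hint_proxy} transfers the identity $\tilde{\delta}_t=\log g-\log Z$ and its sign equivalences from Theorem~\ref{thm:disagreement_incorrectness} to $\delta_t$, the rollout's reward is irrelevant because $g$ and $Z$ condition only on the prefix and chosen token, and Sec.~\ref{sec:emp_study} is cited as evidence for the assumption rather than as part of a proof.

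Your decomposition $\delta_t=\tilde{\delta}_t+\epsilon_t$ with $|\epsilon_t|\le\eta$ is a sharpening the paper does not make, and it gives ``large negative'' a precise meaning. You only need this bound at the realized token, which is better than a supremum over all $b$: that supremum is infinite whenever some $g(x,y_{<t},b)=0$, because $\pi_{\mathrm{post}}$, being a softmax, has full support.
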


\section{Generation Prompts}
\label{apdx:sec:prompt}
In this section, we provide the complete prompts used in our experiments: vanilla generation (Table~\ref{tab:prompt_vanilla}), reference-guided generation (Table~\ref{tab:prompt_rgg}), and two variants from the ablation study in Sec.~\ref{sec:ablation}: multi-turn gold-shot (Table~\ref{tab:prompt_mtgs}) and multi-turn 1-shot generation (Table~\ref{tab:prompt_mt1s}).
While the exact prompt format is not the core contribution of \ours, it plays a practical role in determining how effectively the reference-guided distribution shifts toward correctness-oriented reasoning. As long as the prompt elicits a posterior distribution that is more correctness-informed than the vanilla generation, the contrastive disagreement remains a meaningful signal regardless of the specific format. The performance differences across prompt variants in Table~\ref{tab:ablation} reflect how effectively each format elicits correctness-oriented reasoning, rather than a fundamental sensitivity to prompt design. 
Based on our ablation study, we suggest the following practical guidelines for reference-guided prompt design: (1) use a single-turn format, as LLMs have stronger single-turn instruction-following capability; (2) generate in a first-person, real-time problem-solving tone to encourage genuine reasoning; (3) instruct the model not to acknowledge the reference answer, ensuring the model reasons rather than copies; (4) require complete reasoning with intermediate steps and error-correction to capture meaningful token-level differences.

\begin{table*}[h!]
\centering
\caption{Prompt used in vanilla generation.}
\label{tab:prompt_vanilla}
\begin{tcolorbox}[colframe=blue!50, colback=blue!5, fonttitle=\bfseries\large, coltitle=black, boxrule=0.5mm, arc=5mm, auto outer arc, width=\textwidth,toptitle=6pt, bottomtitle=6pt]
\small
\setstretch{1.2} %
\texttt{<|im\_start|>system\\
You are a helpful assistant.<|im\_end|>\\
<|im\_start|>user\\
{\textbf{[question]}}\\
Please reason step by step, and put your final answer within \textbackslash boxed\{\}.<|im\_end|>\\
<|im\_start|>assistant}
\end{tcolorbox}
\end{table*}

\begin{table*}[h!]
\centering
\caption{Prompt used in reference-guided generation.}
\label{tab:prompt_rgg}
\begin{tcolorbox}[colframe=blue!50, colback=blue!5, fonttitle=\bfseries\large, coltitle=black, boxrule=0.5mm, arc=5mm, auto outer arc, width=\textwidth,toptitle=6pt, bottomtitle=6pt]
\small
\setstretch{1.2} %
\texttt{<|im\_start|>system\\
You are a helpful assistant.<|im\_end|>\\
<|im\_start|>user\\
\#\#\#Question: \textbf{[question]}\\
\#\#\#Reference Answer: \textbf{[reference answer]}\\
Given a question and reference answer, generate a first-person step-by-step reasoning process leading to the answer in \textbackslash boxed\{\}.\\
\newline
Requirements:\\
1. Output only the first-person thought process, no preamble or summary\\
2. Simulate real-time problem-solving tone, avoid meta-commentary\\
3. Don't mention or imply knowing the reference answer (avoid 'according to the answer...' etc.)\\
4. Show complete reasoning path with intermediate steps, verification, and error-correction, not just restatement\\
5. Put final answer in \textbackslash boxed\{\}<|im\_end|>\\
<|im\_start|>assistant}
\end{tcolorbox}
\end{table*}

\begin{table*}[h!]
\centering
\caption{Prompt used in multi-turn gold-shot generation.}
\label{tab:prompt_mtgs}
\begin{tcolorbox}[colframe=blue!50, colback=blue!5, fonttitle=\bfseries\large, coltitle=black, boxrule=0.5mm, arc=5mm, auto outer arc, width=\textwidth,toptitle=6pt, bottomtitle=6pt]
\small
\setstretch{1.2} %
\texttt{<|im\_start|>system\\
You are a helpful assistant.<|im\_end|>\\
<|im\_start|>user\\
{\textbf{[question]}}<|im\_end|>\\
<|im\_start|>assistant\\
\textbf{[reference answer]}<|im\_end|>\\
<|im\_start|>user\\
{\textbf{[question]}}\\
Please reason step by step, and put your final answer within \textbackslash boxed\{\}.<|im\_end|>\\
<|im\_start|>assistant}
\end{tcolorbox}
\end{table*}

\begin{table*}[h!]
\centering
\caption{Prompt used in multi-turn 1-shot generation.}
\label{tab:prompt_mt1s}
\begin{tcolorbox}[colframe=blue!50, colback=blue!5, fonttitle=\bfseries\large, coltitle=black, boxrule=0.5mm, arc=5mm, auto outer arc, width=\textwidth,toptitle=6pt, bottomtitle=6pt]
\small
\setstretch{1.2} %
\texttt{<|im\_start|>system\\
You are a helpful assistant.<|im\_end|>\\
<|im\_start|>user\\
Let $S = \{5^k | k \in \mathbb{Z}, 0 \le k \le 2004 \}$. Given that $5^{2004} = 5443 \cdots 0625$ has $1401$ digits, how many elements of $S$ begin with the digit $1$?<|im\_end|>\\
<|im\_start|>assistant\\
Note that $5^n$ has the same number of digits as $5^{n-1}$ if and only if $5^{n-1}$ has a leading digit $1$. Therefore, there are 2004 - 1401 = 603 numbers with leading digit $1$ among the set $\{5^1, 5^2, 5^3, \cdots 5^{2003}\}.$ However, $5^0$ also starts with $1$, so the answer is 603 + 1 = \textbackslash boxed\{604\}.<|im\_end|>\\
<|im\_start|>user\\
{\textbf{[question]}}\\
Please reason step by step, and put your final answer within \textbackslash boxed\{\}.<|im\_end|>\\
<|im\_start|>assistant}
\end{tcolorbox}
\end{table*}

\section{Detailed training Implementations}
\label{apdx:sec:impl}
To verify the effectiveness of \ours in enhancing RLVR performance, we conduct experiments on two base models: the general-purpose Qwen3-Base-4B~\cite{yang2025qwen3} and the domain-specific Qwen2.5-Math-7B~\cite{yang2024qwen2}. We train both models on the MATH dataset~\cite{hendrycks2021measuring}, which contains 7.5k problems. We implement \ours based on GRPO~\cite{shao2024deepseekmath}.
During training, we use a prompt batch size of 1024 with 8 rollouts per prompt, and set the sampling temperature to 1.0. The maximum context length is set to 4096 for Qwen2.5-Math-7B and 8192 for Qwen3-Base-4B. The combination strength $\alpha^{+}$ and $\alpha^{-}$ are both set to 0.025. We perform policy updates with a mini-batch size of 256 and a learning rate of 1e-6. The KL penalty coefficient between the policy and reference models is set to 1e-3, and the PPO clip ratio is set to 0.2. We conduct all experiments on 8 A800 GPUs. The additional forward pass required to calculate reference-guided generation probabilities increases training time by approximately 20\%.
\section{Additional Experiments and Analyses}

\begin{table}[h]
\centering
\caption{Mean $\pm$ std over 3 independent runs on Qwen2.5-Math-7B.}
\label{tab:std}
\resizebox{\textwidth}{!}{
\begin{tabular}{lcccccccc}
\toprule
\textbf{Methods} & \textbf{MATH-500} & \textbf{AIME2024} & \textbf{AIME2025} & \textbf{AMC2023} & \textbf{GPQA} & \textbf{MMLU-PRO} & \textbf{KnowLogic} & \textbf{Avg.} \\
\midrule
GRPO & 88.0\scriptsize{±0.6} & 45.5\scriptsize{±3.9} & 30.0\scriptsize{±3.3} & 85.0\scriptsize{±2.5} & 61.2\scriptsize{±3.1} & 61.8\scriptsize{±1.3} & 65.0\scriptsize{±3.1} & 62.4\scriptsize{±0.7} \\
DAPO & 87.8\scriptsize{±1.4} & 38.9\scriptsize{±10.3} & 30.0\scriptsize{±3.3} & 89.2\scriptsize{±3.8} & 62.3\scriptsize{±4.1} & 61.2\scriptsize{±4.0} & 64.0\scriptsize{±3.8} & 61.9\scriptsize{±1.1} \\
CPO  & 88.8\scriptsize{±0.4} & 58.9\scriptsize{±1.9} & 40.0\scriptsize{±3.3} & 92.5\scriptsize{±2.5} & 68.0\scriptsize{±2.3} & 68.4\scriptsize{±1.5} & 68.3\scriptsize{±0.5} & 69.4\scriptsize{±0.8} \\
\bottomrule
\end{tabular}}
\end{table}

\subsection{Statistical Significance.}
\label{sec:apdx:statis}
To verify the reliability of our results, we conduct 3 independent runs for CPO, GRPO, 
and DAPO and report mean $\pm$ std in Table~\ref{tab:std}.
CPO consistently outperforms both baselines with improvements well beyond one standard 
deviation across all benchmarks. Notably, CPO exhibits substantially smaller variance 
than DAPO on challenging benchmarks such as AIME2024 (±1.9 vs. ±10.3), suggesting more 
stable training dynamics. We also note that our primary evaluation metric Pass@16 
inherently mitigates evaluation variance by aggregating over 16 sampled responses per 
problem, further reducing the influence of random sampling.

\subsection{CPO on the Zero-Advantage Problem}
\label{sec:apdx:zero}
\begin{table}[h]
\centering
\caption{Performance comparison of CPO, RL-ZVP, and the base model across benchmarks under the zero-advantage controlled setting.}
\label{tab:zero_advantage}
\adjustbox{max width=\textwidth}{
\setlength{\tabcolsep}{1mm}{
\begin{tabular}{lccccccccc}
\toprule
\textbf{Methods} & \textbf{MATH-500} & \textbf{AIME2024} & \textbf{AIME2025} & \textbf{AMC2023} & \textbf{GPQA} & \textbf{MMLU-PRO} & \textbf{KnowLogic} & \textbf{Avg.} \\
\midrule
Qwen2.5-Math-7B & 87.8 & 43.3 & 30.0 & 85.0 & \textbf{70.5} & 63.4 & 55.8 & 62.3 \\
+ RL-ZVP          & 87.2 & 43.3 & 30.0 & 85.0 & 67.1 & 63.1 & 53.2 & 61.3 \\
+ CPO             & \textbf{88.8} & \textbf{50.0} & \textbf{33.3} & \textbf{87.5} & 69.6 & \textbf{64.3} & \textbf{58.0} & \textbf{64.5} \\
\bottomrule
\end{tabular}}}
\end{table}
For zero-advantage prompts, existing entropy-based methods such as RL-ZVP attempt to assign a high advantage for high-entropy tokens. However, since entropy only captures uncertainty without distinguishing whether a token contributes to correct or incorrect reasoning, the resulting signal remains correctness-agnostic: it may inadvertently reinforce confused or erroneous token choices that happen to exhibit high entropy. CPO addresses this in a more principled manner. CPO uses the token-level contrastive disagreement $\delta_{i,t}$ as the shaped advantage, which is theoretically and empirically demonstrated in Section 3.1 to serve as a reliable proxy for token-level correctness probability. This provides a more meaningful and directionally reliable advantage signal compared to existing entropy-intervened methods. In our experiments, this advantage is reflected in the consistently superior performance of CPO over RL-ZVP across both in-domain and out-of-domain benchmarks (\textit{e.g.}, +9.5\% average on Qwen2.5-Math-7B and +6.6\% on Qwen3-Base-4B). To further isolate the contribution of CPO on zero-advantage prompts, we conduct a controlled experiment where only rollouts from zero-advantage groups are used for training, comparing advantage shaping of CPO and RL-ZVP under this setting. The results demonstrate that CPO consistently outperforms RL-ZVP under this controlled zero-advantage setting, with a +3.2\% average improvement. This confirms that the advantage signals of CPO is more useful over entropy-based methods.

\subsection{Comparison with SFT and Distillation Methods}
\label{sec:apdx:sft}
\begin{table}[h]
\centering
\caption{Performance comparison of CPO with SFT and distillation baselines on Qwen2.5-Math-7B.}
\label{tab:sft_distill_comparison}
\adjustbox{max width=\textwidth}{
\setlength{\tabcolsep}{1mm}{
\begin{tabular}{lccccccccc}
\toprule
\textbf{Methods} & \textbf{MATH-500} & \textbf{AIME2024} & \textbf{AIME2025} & \textbf{AMC2023} & \textbf{GPQA} & \textbf{MMLU-PRO} & \textbf{KnowLogic} & \textbf{Avg.} \\
\midrule
Original    & 87.8 & 43.3 & 30.0 & 85.0 & 70.5 & 63.4 & 55.8 & 62.3 \\
\midrule
SFT         & 79.6 & 23.3 & 16.6 & 65.0 & 52.0 & 49.1 & 61.4 & 49.6 \\
Distillation & 85.2 & 43.3 & 33.3 & 85.0 & 77.9 & 65.7 & 70.3 & 65.8 \\
\textbf{CPO} & \textbf{89.0} & \textbf{60.0} & \textbf{43.3} & \textbf{92.5} & \textbf{68.8} & \textbf{69.0} & \textbf{68.9} & \textbf{70.2} \\
\bottomrule
\end{tabular}}}
\end{table}

To ensure fair comparison under equivalent data conditions, we conduct additional experiments comparing CPO with SFT and distillation baselines on Qwen2.5-Math-7B, where the distillation baseline is constructed by fine-tuning the model on trajectories self-sampled with reference-guided prompts.

As shown in Table~\ref{tab:sft_distill_comparison}, vanilla SFT degrades substantially (-12.7 avg.) compared to the original Qwen2.5-Math-7B, while distillation recovers much of the performance (+3.5 avg.), and CPO achieves the best results by a significant margin (+7.9 avg.).
SFT suffers from a fundamental distribution mismatch: the concise ground-truth trajectories are far from the model's own generation distribution, making them difficult to learn from directly. Distillation mitigates this by using reference-guided self-sampling to generate trajectories that are initially much closer to the model's own distribution. Moreover, the additional reference information guides the model to produce more reasonable and learnable samples, which explains the consistent performance gains over vanilla SFT.
However, distillation remains inherently off-policy: the sampled trajectories are collected once and fixed as static supervised targets. As training progresses and the policy evolves, these fixed targets become increasingly misaligned with the current model distribution, ultimately limiting its performance. CPO avoids this issue entirely by operating fully on-policy: the reference answer is never used as a imitation target, but solely to compute a token-level contrastive signal at each training step using the current policy, allowing  the contrastive signal to evolve alongside the model and preserving full exploration capability.
% \section{You \emph{can} have an appendix here.}

% You can have as much text here as you want. The main body must be at most $8$
% pages long. For the final version, one more page can be added. If you want, you
% can use an appendix like this one.

% The $\mathtt{\backslash onecolumn}$ command above can be kept in place if you
% prefer a one-column appendix, or can be removed if you prefer a two-column
% appendix.  Apart from this possible change, the style (font size, spacing,
% margins, page numbering, etc.) should be kept the same as the main body.
%%%%%%%%%%%%%%%%%%%%%%%%%%%%%%%%%%%%%%%%%%%%%%%%%%%%%%%%%%%%%%%%%%%%%%%%%%%%%%%
%%%%%%%%%%%%%%%%%%%%%%%%%%%%%%%%%%%%%%%%%%%%%%%%%%%%%%%%%%%%%%%%%%%%%%%%%%%%%%%

\end{document}